\documentclass{article}

\usepackage{microtype}
\usepackage{graphicx}
\usepackage{subcaption}
\usepackage{booktabs} 

\usepackage[ruled,vlined]{algorithm2e}
\usepackage{algorithmic}

\usepackage[preprint]{icml2026}

\usepackage[utf8]{inputenc} 
\usepackage[T1]{fontenc}    
\usepackage{hyperref}       
\usepackage{url}            
\usepackage{booktabs}       
\usepackage{amsfonts}       
\usepackage{nicefrac}       
\usepackage{microtype}      
\usepackage{xcolor}         
\usepackage{caption}
\usepackage{todonotes}
\usepackage{amsmath}
\usepackage{amssymb}
\usepackage{mathtools}
\usepackage{amsthm}
\usepackage{multirow}
\usepackage{booktabs}
\usepackage{bm}

\usepackage{listings}
\usepackage[capitalize,noabbrev]{cleveref}

\theoremstyle{plain}
\newtheorem{theorem}{Theorem}[section]

\newtheorem{lemma}[theorem]{Lemma}

\theoremstyle{definition}
\newtheorem{definition}[theorem]{Definition}

\theoremstyle{remark}

\icmltitlerunning{MINT: Minimal Information Neuro-Symbolic Tree for Objective-Driven Knowledge-Gap Reasoning and Active Elicitation}

\begin{document}

\twocolumn[
  \icmltitle{MINT: Minimal Information Neuro-Symbolic Tree for \\ Objective-Driven Knowledge-Gap Reasoning and Active Elicitation}



  \icmlsetsymbol{equal}{*}

  \begin{icmlauthorlist}
    \icmlauthor{Zeyu Fang}{yyy}
    \icmlauthor{Mahdi Imani}{xxx}
    \icmlauthor{Tian Lan}{yyy}
  \end{icmlauthorlist}

  \icmlaffiliation{yyy}{Department of Electrical and Computer Engineering, George Washington University}
  \icmlaffiliation{xxx}{Department of Electrical and Computer Engineering, Northeastern University}

  \icmlcorrespondingauthor{Zeyu Fang}{joey.fang@gwu.edu}

  \icmlkeywords{Neuro-Symbolic, Knowledge Gap, Planning}

  \vskip 0.3in
]



\printAffiliationsAndNotice{}  

\begin{abstract}
Joint planning through language-based interactions is a key area of human-AI teaming. Planning problems in the open world often involve various aspects of incomplete information and unknowns, e.g., objects involved, human goals/intents -- thus leading to knowledge gaps in joint planning. We consider the problem of discovering optimal interaction strategies for AI agents to actively elicit human inputs in object-driven planning. To this end, we propose Minimal Information Neuro-Symbolic Tree (MINT) to reason about the impact of knowledge gaps and leverage self-play with MINT to optimize the AI agent’s elicitation strategies and queries. More precisely, MINT builds a symbolic tree by making propositions of possible human-AI interactions and by consulting a neural planning policy to estimate the uncertainty in planning outcomes caused by remaining knowledge gaps. Finally, we leverage LLM to search and summarize MINT’s reasoning process and curate a set of queries to optimally elicit human inputs for best planning performance. By considering a family of extended Markov decision processes with knowledge gaps, we analyze the return guarantee for a given MINT with active human elicitation. Our evaluation on three benchmarks involving unseen/unknown objects of increasing realism shows that MINT-based planning attains near-expert returns by issuing a limited number of questions per task while achieving significantly improved rewards and success rates.
\end{abstract}

\section{Introduction}





The ability for AI agents and humans to jointly plan and solve problems in the open world is instrumental to human-AI teaming. It is expected that AI agents will function as teammates instead of simply a tool amplifying human capabilities or generating human-like responses~\citep{xu2023applyinghcaidevelopingeffective}. To this end, Large Language Models (LLMs)~\citep{xiao2023llm} and Agentic AI~\citep{carroll2019utility} have demonstrated their potential in joint planning with humans (e.g., navigation, task support, and collaboration~\citep{zu2024language, zhang2024proagent}) through natural-language interactions. However, joint planning in the open world is rarely about solving a well-defined or known problem -- various aspects of the planning problem, such as objects involved, human goals/intents, and their impact on planning (e.g., with respect to transition, constraints, and rewards), are often not completely known to agents, resulting in potential knowledge gaps. 
Ignoring or planning through the uncertainty 
may lead to conservative executions and non-optimal plans~\citep{lockwood2022review, chen2025tunable}.


The problem can be addressed by AI agents actively eliciting necessary human inputs (in the form of language-based interactions) toward bridging knowledge gaps and supporting task objectives. However, 
existing solutions such as Reinforcement Learning with Human Feedback (RLHF) and few-shot adaptation in LLMs~\citep{bai2022training, ji2023beavertails} are able to process additional human feedback, but lack the ability to expressively reason and actively elicit human inputs with respect to planning objectives~\citep{huang2022language}.
On the other hand, Human-in-the-Loop (HITL) approaches~\citep{mandel2017add} and interactive learning~\citep{teso2019explanatory} allow fine-tuning using recommendations or expert demonstrations~\citep{mosqueira2023human}. But they do not directly address the knowledge gaps nor support objective-driven active elicitation.

We propose Minimal Information Neuro-Symbolic Tree (MINT) to reason about the impact of knowledge gaps and leverage self-play to optimize the AI agent's elicitation strategy of human inputs in objective-driven planning. MINT defines a symbolic tree, where each node represents a state of the planning problem with a corresponding knowledge gap, while each edge represents a proposition of possible human-AI interactions to narrow down the knowledge gap in child nodes.
We expand MINT by consulting a neural planning policy, 
more specifically, in this work we use UA-DQN, an uncertainty-aware DQN algorithm based on bootstrapped distributional RL~\citep{clements2019estimating} to estimate the uncertainty in planning outcomes caused by the remaining knowledge gap of each child node, until each node has a unitary impact on planning outcomes. Thus, we enable a self-play process using MINT to analyze how different interactions between human and AI can bridge the knowledge gap and lead to optimized planning results. Finally, we leverage LLM to search and summarize MINT's reasoning process and curate a set of questions/queries to optimally elicit human inputs for best planning performance.



By considering a family of extended Markov Decision Processes (MDPs) with knowledge gaps,
we analyze the impact of the final knowledge gap in a given MINT and for a set of identified questions (from AI to human). We prove a local pseudo-Lipschitz continuity of the planning returns and provide an upper bound on the return-gap between MINT-based planning and an ideal case without any knowledge gap~\cite{zhang2025lipschitz}. The results provide a performance guarantee for our MINT-based planning with active human elicitation. 
To the best of our knowledge, this is the first work combining symbolic reasoning of knowledge gaps, neural planning policy, and LLM to enable active and optimized human elicitation in language-based planning.
We evaluate MINT on three benchmarks involving unseen/unknown objects (affecting transitions or rewards) of increasing realism:
MiniGrid; Atari ALE -- modified Atari games, and real-world search and rescue in Isaac Gym.
Across all domains, MINT-based planning attains near-expert returns by issuing only 1–3 binary questions per object, while achieving a significantly increased reward and success rate over baselines.


The primary contributions of this paper are threefold:
\vspace{-0.1in}
\begin{itemize}
    \item We propose MINT, which consults a neural planning policy and symbolically reasons about the impact of knowledge gaps. Through self-play with relevant propositions, it enables active and optimized human elicitation using LLM toward planning objectives.
    \vspace{-0.03in}
    \item We show local pseudo-Lipschitz continuity of the planning returns and provide an upper bound on the return-gap between MINT-based planning and an ideal case without any knowledge gap, providing solid support to MINT-based planning.
    \vspace{-0.03in}
    \item Through empirical evaluations, we demonstrate that MINT-based planning attains near-expert returns with almost an order of magnitude fewer questions (from AI to human), 
    in challenging RL environments such as Minigrid, Atari Pacman and Isaac drone environment.
\end{itemize}

\section{Related Works}
\paragraph{Language-Based Planning} Language-based planning has been researched before the rise of LLMs. Previous methods integrate the planning into the RL process regarding query generation as part of the action space. The agent will learn to execute language query commands when having difficulty in decision making~\citep{liu2022asking, nguyen2021learning}.
However, these methods could only generate simple queries with predefined vocabulary used.
Recently, advances in LLMs show great potential for solving language-based joint planning tasks with human-AI cooperation.~\citep{chang2025calibrating} Novel frameworks have been proposed combining LLMs with classical AI planning agents in a variety of fields including  robotic and embodied AI~\citep{izquierdo2024plancollabnl, dai2024think}, games~\citep{wu2023smartplay, meta2022human, fang2023implementing}, creative writing~\citep{chakrabarty2022help} and real-world task planning~\citep{xie2024travelplanner, chen2024design}. However, these approaches have a limited ability in understanding and reasoning about knowledge gaps and the resulting impact, which requires a better elicitation strategy of human knowledge.



\paragraph{Planning under Partial Observability.} 

Partial observability poses a major challenge for classical RL that assumes the Markov property.
Recent research tackles this problem in two broad ways: (1) Ignoring or minimizing the knowledge gap, \emph{i.e.}, training agents that do not explicitly model unseen information or perform conservative and safe planning~\citep{meng2021memory, ni2021recurrent, zhang2025learning, fang2024coordinat} (2) Incorporating robust planning or Bayesian-type belief-state reasoning by maintaining an internal representation of uncertainty or plan over a range of possible states, such as latent belief state~\citep{wang2023learning}, Bayesian inference~\citep{gu2021belief}, or ensembles~\citep{ghosh2021generalization}. These approaches do not apply to any knowledge gaps (e.g., unseen objects or unknown goals) and lack the ability to interact with humans in joint planning problems in order to elicit inputs and bridge the knowledge gap.



\paragraph{Human-in-the-Loop RL}
Existing works on HITL RL often focus on the phase of agent learning, in which the human provides direct evaluations or preferences on agents' actions as feedback~\citep{christiano2017deep, rafailov2024direct, ravari2024adversarial, li2025inspo}. Another underexplored paradigm is the incorporation of human demonstrations or advice on actions of expertise~\citep{luo2023human, wu2023toward, igbinedion2024learning}, which requires a domain-expertise policy. 
Recent methods further reduce the communication burden by only asking humans when facing high uncertainty in decision-making~\citep{mandel2017add, da2020uncertainty, singi2024decision}. 
However, expertise actions are not always available. Instead, the agent needs to analyze the knowledge gap first, then precisely query humans for the key information missing, and finally conclude optimal actions by itself.


\paragraph{Neuro-Symbolic RL}

Neuro-symbolic RL (NS-RL) aims to combine the representational power of neural networks with the interpretability offered by symbolic methods~\citep{chen2025neurosymbolic}. Compared with classical interpretable RL approaches (e.g., post-hoc saliency 
~\citep{greydanus2018visualizing} or decision tree~\citep{chen2024rgmdt, tang2025malinzero, li2026acdzero}), NS-RL allows for more explicit knowledge integration and rule-based reasoning, thereby improving both interpretability and generalizability. Recent works have proposed diversified approaches for NS-RL implementation, such as diffsion models~\citep{zheng2024symbolic, fang2024learning} and INSIGHT~\citep{luoend}. Besides, \citet{jin2022creativity} and \citet{lyu2019sdrl} integrate symbolic planning and options as a perception with deep RL to solve tasks with high-dimensional inputs. 
Other works consider integrating symbolic methods into policy representation, including deep symbolic policy search ~\citep{landajuela2021discovering} and programmatic RL~\citep{verma2018programmatically, verma2019imitation}. 




\section{Preliminaries}



\label{Preliminaries}
Planning problems are often formulated as Markov Decision Processes (MDPs). For human-AI joint planning in the open world, various aspects of the planning problem, such as objects, environment factors, and human intents, may not be known to the AI agents, leading to uncertainties in decision-making. While there may be many different types of unknowns in the open world, they eventually impact the MDP's state transition probability and/or the reward function. Hence, we collectively define such unknowns as a knowledge gap $u$. We model this planning problem by an {\em MDP family with knowledge gap}, denoted by $\mathcal{M}=(\mathcal{S},\mathcal{A}, u,
\mathcal{T}_u,\mathcal{R}_u,\gamma)$, where $\mathcal{S}$ and $\mathcal{A}$ are the state and action spaces, and $\gamma$ is a discount factor, similar to standard MDPs. However, due to the knowledge gap $u$, we may not know the exact state transition probability and reward function. We model such uncertainty through two sets, $\mathcal{T}_u$ and $\mathcal{R}_u$ respectively, given the knowledge gap $u$. Thus, $T(s'|s,a)\in \mathcal{T}_u: \ \mathcal{S} \times \mathcal{S} \times \mathcal{A} \rightarrow [0, 1]$ is a possible probability distribution of state transition, and $R(s,a) \in \mathcal{R}_u: \ \mathcal{S} \times \mathcal{A} \rightarrow \mathbb{R}$ is a possible reward function, under the knowledge gap $u$. 


We consider the problem of actively eliciting human inputs to bridge the knowledge gap and to optimize planning objects. To this end, we consider a parametric representation of the knowledge gap and uncertainty sets. Let $\varphi$ be a descriptor vector uniquely defining the transition probability distribution $T_\varphi(s'|s,a)$ and the reward function $R_\varphi(s,a)$. We can rewrite the uncertainty set using $\Phi_u \subseteq \mathbb{R}^d$, where $d \in \mathbb{N}$ is fixed and $\Phi_u$ is the descriptor space under knowledge gap $u$. For each descriptor $\varphi\in\Phi_u$, the corresponding MDP, 
\(
  M_{\varphi}\;=\;
  \left(\mathcal{S},\mathcal{A},
        T_{\varphi},
        R_{\varphi},
        \gamma\right)
\)
becomes a standard known one. The extended MDP family under knowledge gap $u$ is therefore defined as 
$\mathcal{M}_u=\{M_{\varphi}\mid\varphi\in\Phi_u\}$.
For a given descriptor $\varphi\in\Phi_u$ (thus zero knowledge gap), the corresponding MDP, $M_{\varphi}$ can be solved using standard approaches like RL~\citep{mnih2015human}. This is achieved by finding a planning policy to maximize the expected discounted cumulative rewards, denoted as the return $J$:
\begin{align}
& J(\pi|\varphi)\;=\;\mathbb{E}\!\Bigl[\textstyle\sum_{t=0}^{\infty}\gamma^{t}\, 
      R_\varphi(s_{t},a_{t})\Bigr], \nonumber \\
      & {\rm where} \
   (s_{0},a_{0},s_{1},\dots)\sim M_\varphi,\pi.
\end{align}
Existing RL algorithms can be leveraged to learn an optimal policy $\pi^*_\varphi(a\vert s)$ to maximize the return, i.e., $\pi^*_{\varphi} = \arg\max_{\pi} J(\pi|\varphi)$. In particular, 
in the value-based deep RL with discrete action spaces, we use neural networks to fit the optimal action value-function, known as Q-function, which estimates the return under the current state and action: $Q^*_\varphi(s,a) = \max_\pi J(\pi\vert \varphi; s_0 = s, a_0 = a)$. Thus, the optimal policy can be determined by taking the optimal action $a_t^* = \pi^*(s) =\arg \max_{a} Q^*(s_t,a)$ when no knowledge gap exists. In this paper, we focus on eliciting human inputs via natural language to narrow down the knowledge gap $\Phi_u$ and thus achieve optimal planning performance.

\section{Our Proposed Solution Using MINT} 



Consider an MDP in state $s$ and with initial knowledge gap $u_0$. To reason about the impact of $u_0$ , we leverage a symbolic tree representation in MINT and expand it by consulting a neural planning policy. More precisely, starting with a root node representing $u_0$, we consider a sequence of propositions regarding possible human interactions, in the form of potential queries denoted by $q_k=\xi(u_k,s)$ at each step $k$, to elicit a corresponding human response $y_k$ for each query. In this paper, we define each query as a yes/no proposition using natural language, and thus each response is a binary answer $y_k\in\{0,1\}$ based on the human's knowledge of the latent ground-truth. 

By considering such propositions in a self-play process, we build MINT as a symbolic tree representation with the knowledge gap as nodes and propositions as edges. Thus, $(q_k,y_k)$ at each step $k$ iteratively narrows down the knowledge gap in resulting child nodes (depending on different possible human responses), i.e., $\Phi_{u_k}$ with $u_{k+1}=F_{q_k,y_k}(u_k)$ by a processing function $F$. It leads to reduced uncertainty in the potential descriptor space with $\Phi_{u_{k+1}} \subset \Phi_{u_{k}}$ at each knowledge update step $k$. Let $\varphi^*$ be the latent ground-truth descriptor and thus $J(\pi^*_{\varphi^*}|\varphi^*)$ the ideal return without knowledge gap. We aim to find an optimal interaction/query strategy $\xi$ (with complexity $K$) for generating queries and eliciting human responses, so that the return gap at the terminal knowledge gap $u_{K}$ is minimized:
\begin{align}
  \xi^* & = \arg\min_{\xi}\;
  \left[ \max_{\varphi\in\Phi_{u_K}} J(\pi^*_{\varphi^*}|\varphi^*)  - J(\pi^*_{\varphi}|\varphi)  \right], \nonumber \\
  & \ {\rm s.t.} \ u_{k+1}=F_{q_k,y_k}(u_k),\ q_k=\xi(u_k,s), \forall k.
  \label{eq:query-goal}
\end{align}

Figure~\ref{fig:overview} shows the key steps in MINT construction and its execution. 
The first step is to train a neural planning policy (i.e., bootstrapped DQN) that estimates not only $Q_\varphi^*(s,a), \forall \varphi$, but also the variance of the optimal Q-value $\sigma^2_{u}(s) = var_{\varphi\sim\Phi_u}(Q_\varphi^*(s,a^*))$ to quantify the impact of  knowledge gap $u$ on planning outcomes. Next, when the AI agent detects the existence of a knowledge gap $u_0$ (i.e., objects with low confidence or ambiguous human intent), it expands MINT from $u_0$ by considering propositions that further split the current gap into child nodes (in a self-play process) and 
evaluating their impact on planning through $\sigma^2_u(s)$, until reaching the depth limit or the variance/impacts are small enough.
Finally, 
an LLM will process the resulting MINT by merging equivalent sub-trees or leaves with identical optimal actions. It then curates a binary query $q$ dividing the tree to maximize the information gain relating to the optimal action candidates in leaf nodes, given the current tree structure and prompts. Each query is presented to the human in joint planning to get a response $y$, which is utilized to reduce the knowledge gap during execution. This query-and-answer process iteratively interacts with the human to minimize the return gap in objective-oriented joint planning.

\begin{figure*}[th] 
\centering
{\includegraphics[width=0.98\textwidth]{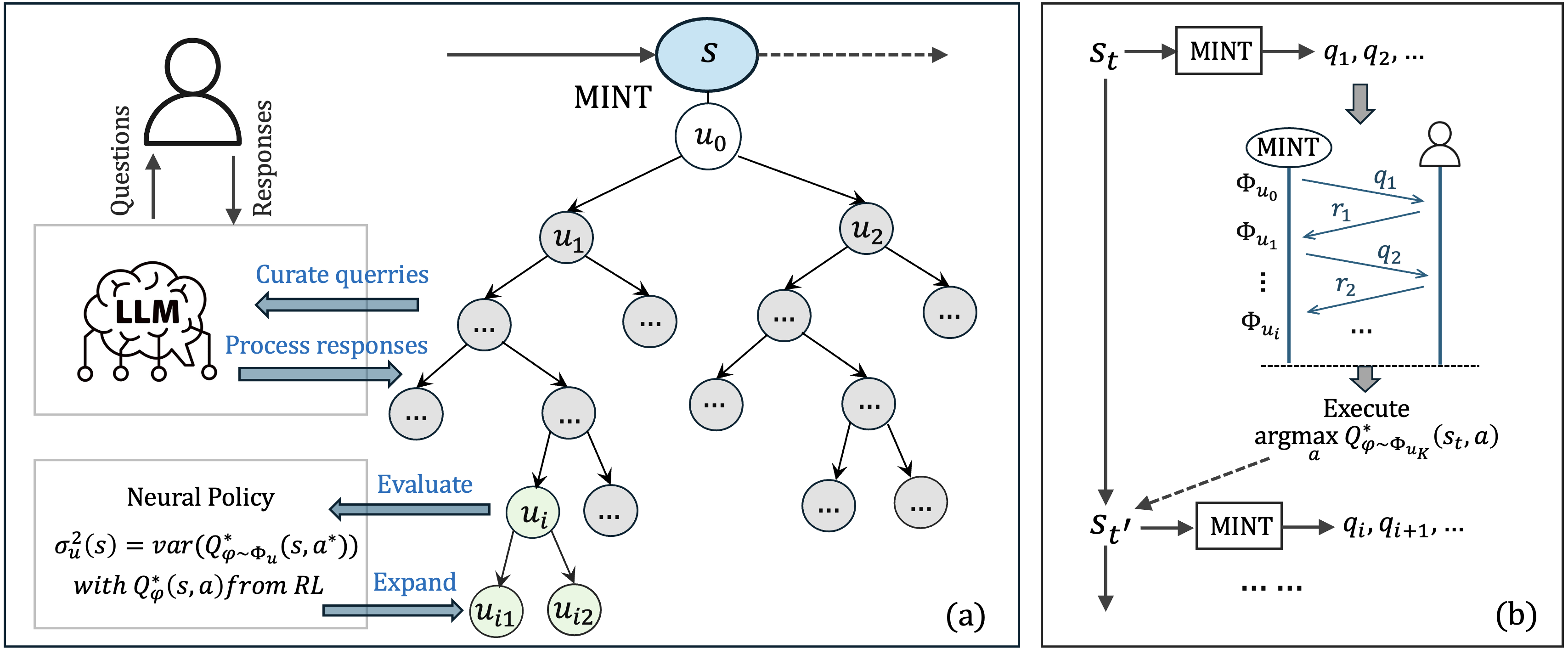}}
  \vspace{-0.05in}
  \caption{{\bf Evaluating, expanding, curating, and acting with MINT.} (a) How we build and expand MINT by first consulting a trained neural planning policy as an oracle, and then utilizing the LLM to curate the queries based on MINT and elicit human responses via natural-language interactions.
  (b) How MINT acts in the environment. AI agent implements the identified queries in its interaction with human in joint planning. The human responses are processed to produce a reduced knowledge gap $u_K$ at last, leading to an optimal action $a$ 
  by maximizing $Q_{\varphi}^*(s,a)$ for all descriptors $\varphi\in \Phi_{u_K}$.}
  \label{fig:overview}
\end{figure*}

\subsection{Evaluating the Impact of Current Knowledge Gaps}


%
The solution in this work consists of two phases: training and deployment. As defined in Section~\ref{Preliminaries}, during the training process, we consider the extended MDP family $\mathcal{M}_u$ with a knowledge gap $u$.
Given a specific $\varphi \in \Phi_u$ in the knowledge gap known to the agent,
the optimal Q-function $Q^*_\varphi(s,a)$ conditioned on $\varphi$ can then be estimated using regular RL approaches.
Besides, to estimate the impacts of a given knowledge gap $u$, we train a bootstrapped DQN to estimate the variance of optimal Q-values over $\varphi \sim \Phi_{u}$ for any $u$ and $(s,a)$, as described later in the training session.

At deployment, the ground-truth descriptor $\varphi^*$ is unknown. 
The AI agent maintains a knowledge vector $u_k \in \mathbb{R}^{d}$, which is associated with $\Phi_{u_k},\, s.t.\, \varphi^* \in \Phi_{u_k} \subset \Phi_u$, limiting the possible range of $\varphi^*$. Initially, we have $u=u_0 $. In each timestep $t$, the agent can choose to generate a proposition with query $q_k$ in natural language and self-play potential response $y_i \in \{0,1\}$. Then, the current knowledge $u_k$ will be updated to $u_{k+1}$ based on $(q_k, y_k)$, iteratively narrowing down the knowledge gap: $\varphi^* \in  \Phi_{u_{k+1}} \subset \Phi_{u_k} \subset \dots \subset \Phi_u$. At the end of this iteration, we aim to bound the return gap within a reasonable value $\varepsilon$ at the terminal knowledge gap $u_{K}$ as :$
  \max_{\varphi\in\Phi_{u_K}} \Vert
   J(\pi^*_{\varphi^*}|\varphi^*)  - J(\pi^*_{\varphi}|\varphi)  \Vert \leq \varepsilon
  $.
We also provide theoretical analysis to demonstrate that this return gap can be bounded by the range of $\Phi_{u_k}$, thus justifying our approaches of iteratively narrowing down $\Phi_u$ via queries.
  



The estimated variance of Q-value is also used to quantify the uncertainty of the knowledge gap on planning outcomes during deployment. If the estimated variance $\tilde\sigma(s,a^*)$ is larger than a predefined threshold $\delta$ and a knowledge gap exists, e.g., an uncertain object detected by the perception module in planning, the agent will label it as {\it unresolved}, and switch to the MINT reasoning process. 
Otherwise, the unknown object is deemed to have little impact on the action selection or planning outcome, and the agent continues planning with the regular RL policy.

\paragraph{Adapted DQN Training Paradigm}
As mentioned above, we use a bootstrapped DQN architecture and training paradigm to estimate both $\mu_{u}(s,a) = \mathbb{E}_{\varphi\sim\Phi_u}[Q^*_\varphi(s,a)]$ and $\sigma^2_{u}(s,a)$ as the mean and variance of $Q$. For a single knowledge descriptor $\varphi \sim \Phi_{u}$, it is treated as an extra input appended to the state to generate Q-values. Besides, the descriptor space $\Phi_{u}$ itself can also be encoded as an input, yielding both the mean and the variance estimation for the Q-value on each action over the distribution of $\varphi$. 
During training, the ground-truth knowledge descriptor $\varphi^*$ is set to be fully observable.
However, when collecting experience and adding an interaction sample to the replay buffer, an additional sample with a randomly generated knowledge gap $u$ will be added as well. This random knowledge gap will mask out part of the information on the original $\varphi$ and thus create a space $\Phi_{u}$ in the sample state, while keeping other interaction data such as the next state and rewards the same. Therefore, the trained model can be accurate under known descriptor $\varphi$ and also estimate the mean and variance of Q-values given a descriptor space $\Phi_u$ due to the knowledge gap $u$.

Next, we explain a bit more on how the variance is estimated. Initially, the uncertainty in the agent's policy can be further categorized into aleatoric uncertainty and epistemic uncertainty. The first case refers to the inherent randomness in transitions or rewards, while the latter arises from the agent's limited exploration of the environment. In this work, we focus on aleatoric uncertainty due to the knowledge gap, since in the view of the AI agent, the same knowledge gap $u$ can result in a variety of MDPs with different transition $T_\varphi$ and rewards $R_{\varphi}$ with $\varphi \sim \Phi_u$. Therefore, the only way to mitigate it is by obtaining more latent environmental knowledge from human inputs.

To decouple the two types of uncertainty and estimate aleatoric uncertainty only, we adopt an uncertainty-aware bootstrapped DQN architecture known as UA-DQN~\citep{clements2019estimating}. As a method in distributional RL, the neural network is modified on the final layer to estimate a probability distribution of returns, parameterized by $N$ quantiles with value $q_i(s,a), i\in [1,N]$ as outputs. Thus, the aleatoric uncertainty can be estimated as $\tilde \sigma_{alea}^2 (s,a) = cov_{i\sim \mathcal{U}\{1,N\}}(q_i(s,a\vert \theta_A), q_i(s,a\vert \theta_B)$, in which $\mathcal{U}$ is the uniform distribution while $\theta_A$ and $\theta_B$ are two samples of neural network parameter $\theta$ using MAP sampling~\citep{pearce2018uncertainty}. The details of theories and implementation of UA-DQN are covered in the original paper, which is not the emphasis of our work and is thus omitted.

\subsection{Reasoning and Curating Queries with MINT}


When the AI agent encounters an unseen or low-confidence object or an ambiguous task objective -- whose unknown properties lead to substantial uncertainty in either the state transition~$T$ or the reward function~$R$,
it triggers a reasoning process using MINT. It encompasses multiple steps including node representation, expansion, evaluation of knowledge gaps, and LLM-based curation and processing.


\paragraph{Node Representation.} Each node represents a knowledge gap $u$, with $\Phi_u$ explicitly represented as $\langle type,subtype, w_{min}, w_{max} \rangle $, summarizing the agent's current knowledge about the unknown object. Here, $type$ indicates whether the perturbance is on the state transition $T$ or the reward function $R$, while $subtype$ denotes the critical attribute of the values (\emph{e.g.}, positive or negative rewards, deterministic or stochastic transitions). 
$w_{min}$ and $w_{max}$ are the minimum and maximum of the possible value, which is either the possibility parameter in $T_{\varphi}$ or the reward parameter in $R_{\varphi}$ for $\varphi \in \Phi_u$.
For instance, in a deterministic and discrete environment, we normalize the reward range to $[0,1]$ and bound the perturbation into the form of a set of half-blocked states $\bar S$. Thus, for any $\bar s \in \bar S$, the agent gets an immediate reward $r(\bar s) \in [0,1]$ when the environment changes to $\bar s$, and in this case $w_{min}$ and $w_{max}$ provide the lower and upper bounds of $r(\bar s)$ based on the agent's current knowledge about the object. Similarly, for the transition case, any state transition to $\bar s \in \bar S$ now has probability $p(\bar s)$ to fail and remain in the original state, in which $w_{min}$ and $w_{max}$ will be the upper and lower bounds of $p(\bar s)$ instead. Initially, the root node will be the maximal knowledge gap
$u$ with $\Phi_u = \langle any, any, 0,1\rangle$.

\paragraph{Evaluation and Expansion.}
From the root node, the neural-symbolic information tree starts expanding and creating new branches in the breadth-first order. When visiting each node $u$, we use a modified neural network from the original deep Q-learning architecture that takes both $(s,a)$ and four elements in $\Phi_u$ as inputs, to estimate both the mean $\mu_{u}(s,a)$ and variance $\sigma^2_{u}(s,a)$ of $Q^*_\varphi(s,a), \varphi \in \Phi_u$ for current $s$ and $\forall a \in \mathcal{A}$.
Thus, we can obtain the optimal action of this node as: $a^*_u = \arg \max_a \mu_{u}(s,a)$.
Next, we compare the estimated standard deviation $\sigma_{u}(s,a^*_u)$ with the average return gap between the best action and other actions: $g(u) = \mu_{u}(s, a^*_u) - \max_{a \ne a^*_u}\mu_{u}(s,a)$. With $\lambda_g$ defined as a hyperparameter, if $g(u) \leq \lambda_g \sigma_{u}(s, a^*_u)$ and node $u$ has not reached the depth limit $T_D$, this node will be further expanded with new branches; otherwise, it will remain a leaf node. To expand a node $u$, the tree will create several new child nodes with hypothetical knowledge gap $u'$ based on the parent knowledge $u$, in the order of type, subtype and then value. If the type is unknown, then every new child node will set the knowledge with different possible types and leave other dimensions unchanged, similar to the subtype. If both type and subtype are filled, then there will be two new nodes evenly dividing the value range.

\paragraph{LLM-Based Curation and Processing.}
After MINT is built, we encode it into a concise natural-language prompt and provide it to the LLM, which then (i) merges equivalent sub-trees, (ii) formulates an informative yes/no query for the human in each round, and (iii) updates the tree with the returned answers, and (iv) recursively repeats the process until the optimal action is identified in the tree or the queries reach maximum complexity.

Two types of merging operations are performed. First, if every leaf beneath a parent node leads to the same optimal action, then all leaves are folded into their parent, eliminating redundant queries. Second, distinct branches that ultimately associate with an identical action are integrated into a single disjunctive node with a joint condition. For instance, branches defined by $(\textit{type}=1,\textit{subtype}=1)$ and $(\textit{type}=2,\textit{subtype}=2)$ both having the optimal action $1$, are reorganized as a node under the root with the joint condition $(\textit{type}=1\land\textit{subtype}=1)\lor(\textit{type}=2\land\textit{subtype}=2)$. After pruning, the LLM is required to synthesize a binary question whose answer will maximize information gain on the optimal action candidates. This question will then be sent to humans and wait for the human answer. In our experiments, we use another LLM with full knowledge to automatically generate the yes/no answers. This response is then appended to the dialogue context and used to prune all branches inconsistent with that answer in both the original and the reorganized tree. If ambiguity in action selection remains, this procedure restarts with the updated tree and a new root.

\subsection{Theoretical Analysis}


We show that MINT provides a guaranteed planning performance,
by iteratively narrowing down the knowledge gap through interactions. 
To this end, we provide an upper bound on the return gap between MINT-based planning and an ideal case without the knowledge gap. We derive this bound by first defining the metric between different MDPs, then demonstrating the one-step Bellman bound on q-functions, inductively generalizing it to the Lipschitz continuity of optimal Q-functions, and finally proving the upper bound on returns for intermediate MDPs. 

\begin{definition} Consider two MDPs that differ slightly in their transition kernels and reward functions. Let the two MDPs be denoted as \( M = (S, A, T, R, \gamma) \) and \( \bar M = (S, A, T', R', \gamma) \), where \(\gamma \in [0,1)\).  The pseudo-metric $\Delta_{s,a}(M, \bar M)$ between these two processes at $(s, a) \in \mathcal{S} \times \mathcal{A}$ is defined as:
\begin{equation}
    \Delta_{s,a}(M\Vert\bar M) = \min\{d_{s,a}(M\Vert\bar M) , d_{s,a}(\bar M\Vert M)\}
\end{equation}
Here, $d_{s,a}(\bar M \Vert M)$ is the MDP dissimilarity defined as the unique solution to the fixed-point equation for $d_{s,a}$:

\begin{align}
    & d_{s,a} = \vert R(s,a) - R'(s,a)\vert + \gamma \sum_{s' \in \mathcal{S} } T(s'\vert s, a)\max_{a'}d_{s',a'} \nonumber \\
    &+ \gamma \sum_{s' \in \mathcal{S} } V^*(s') \vert T(s'|s,a) - T'(s'|s,a)\vert 
\end{align}

\end{definition}
This definition is for discrete state spaces and can be straightforwardly extended to continuous state spaces.
$\Delta_{s,a}$ is called a pseudo-metric since it doesn't obey the positive definiteness, \emph{i.e.}, we have $\Delta_{s,a}(x, x) = 0,\forall x$ but $\Delta_{s,a}(x, y) = 0\nRightarrow x=y$. Hence we use the term pseudo-Lipschitz continuity to denote that it's in the same form as Lipschitz continuity but built on a pseudo-metric. Before proving the continuity, we first provide a lemma on one-step Bellman bound.

\begin{lemma}[One-step Bellman bound] \label{Lemma1}
    With $\Gamma$ defined as the Bellman Operator on any function $Q:\mathcal{S} \times \mathcal{A} \rightarrow \mathbb{R}$ as:
    \begin{equation}
        \Gamma Q(s,a) = R(s,a) + \gamma \sum_{s' \in \mathcal{S}}T(s'\vert s,a)\max_{a' \in \mathcal{A}} Q(s',a'),
    \end{equation}
    for any two MDPs $M$ and $\bar M$, if function $Q$ is already bounded by $\Delta_{s,a}(M, \bar M)$, \emph{i.e.}, $\vert Q_{M}(s,a) - Q_{\bar M}(s,a)\vert \leq \Delta_{s,a}(M, \bar M)$, then we can guarantee:
    \begin{equation}
        \vert \Gamma Q_{M}(s,a) - \Gamma Q_{\bar M}(s,a)\vert \leq \Delta_{s,a}(M, \bar M). 
    \end{equation}

\end{lemma}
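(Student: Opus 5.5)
Write $M=(S,A,T,R,\gamma)$ and $\bar M=(S,A,T',R',\gamma)$. We read the statement as follows: $Q_M$ and $Q_{\bar M}$ are functions with $\vert Q_M(s',a')-Q_{\bar M}(s',a')\vert\le d_{s',a'}$ for every $(s',a')$, where $d$ is the dissimilarity attaining the minimum in $\Delta$. The operator $\Gamma$ is taken with respect to the MDP each function belongs to: $\Gamma$ uses $(R,T)$ on $Q_M$ and $(R',T')$ on $Q_{\bar M}$. We also read the hypothesis as $\max_{a'}Q_M(s',a')\le V^*(s')$, i.e.\ $Q_M$ is dominated by the optimal value of $M$; this is how $V^*$ enters the definition.

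The plan is to write out the difference and add and subtract a cross term, $R'(s,a)+\gamma\sum_{s'}T(s'|s,a)\max_{a'}Q_{\bar M}(s',a')$. This splits $\Gamma Q_M(s,a)-\Gamma Q_{\bar M}(s,a)$ into three pieces:
\begin{align}
&\bigl(R(s,a)-R'(s,a)\bigr) \nonumber\\
&+\gamma\sum_{s'}T(s'|s,a)\bigl(\max_{a'}Q_M(s',a')-\max_{a'}Q_{\bar M}(s',a')\bigr) \nonumber\\
&+\gamma\sum_{s'}\bigl(T(s'|s,a)-T'(s'|s,a)\bigr)\max_{a'}Q_{\bar M}(s',a'). \nonumber
\end{align}
We then apply the triangle inequality and bound each piece against the matching term of the fixed-point equation defining $d_{s,a}$.

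The first piece is bounded by $\vert R(s,a)-R'(s,a)\vert$ directly. For the second, we use the elementary fact $\vert\max_{a'}f(a')-\max_{a'}g(a')\vert\le\max_{a'}\vert f(a')-g(a')\vert$ together with the hypothesis. This gives a bound of $\gamma\sum_{s'}T(s'|s,a)\max_{a'}d_{s',a'}$, which is exactly the recursive term in the definition.

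The third piece is the main obstacle, since it needs $\max_{a'}Q(s',a')\le V^*(s')$. For this reason the cross term should be chosen so that the transition difference multiplies the value of the MDP whose $V^*$ appears in the definition. If the ordering turns out the other way, we swap roles and use the add/subtract with $T'$ in the middle term instead. We then need $\vert\max Q\vert\le V^*$, which holds when rewards are nonnegative (the rewards here are normalized to $[0,1]$) and $Q$ is bounded by the optimal value. We state this as the standing assumption, i.e.\ the iterates start from $0$ and $\Gamma$ is monotone, and bound the piece by $\gamma\sum_{s'}V^*(s')\vert T-T'\vert$.

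Summing the three bounds gives exactly the right-hand side of the fixed-point equation, hence $d_{s,a}$. Since the argument is symmetric in the order of $M$ and $\bar M$, it holds for whichever direction attains the minimum, so the bound is $\Delta_{s,a}(M,\bar M)$.
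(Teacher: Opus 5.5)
Your proposal is correct and matches the paper's proof. It uses the same add-and-subtract of $\gamma\sum_{s'}T(s'|s,a)\max_{a'}Q_{\bar M}(s',a')$, the triangle inequality, the bound $\vert\max f-\max g\vert\le\max\vert f-g\vert$ combined with $\Delta\le d$ in each direction, the identification of the fixed-point right-hand side, and the symmetric argument with roles swapped to get the pointwise minimum. Beyond the paper, you make explicit the condition $0\le\max_{a'}Q_{\bar M}(s',a')\le V^*_{\bar M}(s')$ (and its mirror for $M$ in the swapped direction); the paper uses this silently when it replaces $\max_{a'}Q_{\bar M}$ by $V^*_{\bar M}$, and it does hold for value-iteration iterates started at zero when rewards are nonnegative.
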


\begin{table*}[h]
\centering
\setlength{\tabcolsep}{0.6mm}{
\scalebox{0.98}{
\begin{tabular}{@{}c|c|cccccccc@{}}

\toprule\toprule
\multirow{2}*{Method} & \multirow{2}*{Version} & \multicolumn{2}{c|}{No uncertainty} & \multicolumn{2}{c|}{1 uncertain object} & \multicolumn{2}{c|}{3 uncertain objects} & \multicolumn{2}{c}{5 uncertain objects} \\
\cline{3-10}
~ & ~ & \multicolumn{1}{c|}{Success\%} & \multicolumn{1}{c|}{Avg. Reward} & \multicolumn{1}{c|}{Success\%} & \multicolumn{1}{c|}{Avg. Reward} & \multicolumn{1}{c|}{Success\%} & \multicolumn{1}{c|}{Avg. Reward} & \multicolumn{1}{c|}{Success\%} & Avg. Reward \\
\hline
\multirow{3}*{LLM} & GPT-4o & $100$ & $9.26\pm0.42$& $99$ & $9.30\pm1.36$ & $96$& $8.75\pm2.72$& $95$& $8.61\pm2.92$\\
~ & o3-mini & $100$ & $9.31\pm0.39$& $99$ & $9.29\pm 1.40$ & $100$&$9.33\pm0.47$ & $99$& $8.09\pm3.83$\\
~ & o3 & $100$ & $9.29\pm0.44$ & $100$ & $9.35\pm0.41$& $100$& $9.35\pm0.48$& $98$& $7.81\pm4.17$\\
\hline
\multirow{2}*{Pure RL} & DQN & $100.0\pm0.0$& $9.32\pm0.41$& $83.6\pm1.8$& $6.99\pm4.99$& $65.0\pm7.9$ & $4.63\pm6.34$& $15.2\pm6.2$&$-2.01\pm4.52$\\
~ & PPO & $100.0\pm0.0$& $9.30\pm0.48$& $98.2\pm1.0$& $9.04\pm1.21$& $89.8\pm3.1$& $7.88\pm4.19$& $69.2\pm13.0$ &$5.42\pm6.02$\\
\hline
\multirow{2}*{MINT} & Limited & $100.0\pm0.0$& $9.45\pm0.40$& $99.6\pm0.5$& $9.29\pm1.47$& $100\pm0.0$& $9.90\pm1.09$& $97.0\pm0.9$&$9.56\pm2.31$\\
~ & Standard & $100.0\pm0.0$& $9.47\pm0.33$& $100.0\pm0.0$& $9.75\pm0.66$& $99.4\pm0.5$& $9.71\pm1.88$&$98.2\pm1.3$ &$9.69\pm1.91$\\
\bottomrule\bottomrule
\end{tabular}}}
\caption{The evaluation of different planning strategy in the MiniGrid environment. Here, the standard MINT does not have the limitation on the number of queries, while the limited version has a maximum number of queries for each unknown object. The results show that MINT significantly improve the planning performance with higher reward (primary planning objective) and improved success rate (narrowly defined as task completion), as compared to pure-LLM and pure-RL methods.}
\label{tab:minigrid}
\end{table*}

Lemma~\ref{Lemma1} is an inductive condition to prove the pseudo-Lipschitz continuity of the optimal Q-function $Q^*$, based on the fact that the optimal Q-function $Q^*$ is the unique fixed point of the Bellman equation and thus obtained by value iteration in Q-learning algorithms from identical initial values over different MDPs. Thus, by using induction, we can derive Lemma~\ref{theorem1} from Lemma~\ref{Lemma1}:
\begin{lemma}[Local pseudo-Lipschitz continuity of optimal Q-value] \label{theorem1}
For two MDPs $M, \bar M$, for all $(s,a) \in \mathcal{S} \times \mathcal{A}$, we have:
$\vert Q^*_{M}(s,a) - Q^*_{\bar M}(s,a)\vert \leq \Delta_{s,a}(M, \bar M)$.
\end{lemma}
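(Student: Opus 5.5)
The plan is induction on value-iteration iterates, using Lemma~\ref{Lemma1} as the inductive step and closing with a limit argument. Start both MDPs from the same initial function, $Q^{(0)}_M = Q^{(0)}_{\bar M} \equiv 0$. Define $Q^{(n+1)}_M = \Gamma_M Q^{(n)}_M$ and $Q^{(n+1)}_{\bar M} = \Gamma_{\bar M} Q^{(n)}_{\bar M}$, where $\Gamma_M$ and $\Gamma_{\bar M}$ are the Bellman operators built from $(T,R)$ and $(T',R')$ respectively. Since $\gamma<1$, each operator is a $\gamma$-contraction in sup-norm (assuming bounded rewards). Hence $Q^{(n)}_M \to Q^*_M$ and $Q^{(n)}_{\bar M}\to Q^*_{\bar M}$ pointwise.

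\textbf{Base case and inductive step.} The base case is immediate: $|Q^{(0)}_M(s,a)-Q^{(0)}_{\bar M}(s,a)| = 0 \le \Delta_{s,a}(M,\bar M)$, because the dissimilarities $d_{s,a}$ are nonnegative. For the step, assume $|Q^{(n)}_M(s,a)-Q^{(n)}_{\bar M}(s,a)|\le \Delta_{s,a}(M,\bar M)$ for all $(s,a)$. Lemma~\ref{Lemma1} then gives the same bound for $Q^{(n+1)}$, interpreting its ``$\Gamma Q_M$'' and ``$\Gamma Q_{\bar M}$'' as the respective MDPs' Bellman operators. Passing to the limit $n\to\infty$ pointwise preserves the non-strict inequality, which gives the claim for $Q^*$.

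\textbf{Main obstacle.} The real content sits in whether Lemma~\ref{Lemma1} holds as used, because of the $\min$ in $\Delta$. So I would at least sketch why the inductive step is valid, working with a fixed direction, say $d_{s,a}(M\Vert\bar M)$. Write
\[
\Gamma_M Q_M - \Gamma_{\bar M} Q_{\bar M} = (R-R') + \gamma\sum_{s'} T(\max Q_M - \max Q_{\bar M}) + \gamma \sum_{s'}(T-T')\max_{a'} Q_{\bar M}(s',a').
\]
\begin{itemize}
\item The first term is bounded by $|R-R'|$.
\item The second term is bounded by $\gamma\sum T\,\max_{a'}d_{s',a'}$, using the inductive hypothesis together with $|\max f-\max g|\le\max|f-g|$.
\item The third term requires $|\max_{a'}Q^{(n)}(s')|\le V^*(s')$. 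This holds if rewards are nonnegative (e.g.\ normalized to $[0,1]$ as in the node representation), since the iterates from $0$ increase monotonically to $Q^*$. It also requires pairing $V^*$ with the correct MDP, which fixes which direction of $d$ is used.
\end{itemize}

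\textbf{Handling the $\min$.} The induction is therefore run separately with each direction $d(M\Vert\bar M)$ and $d(\bar M\Vert M)$, choosing the decomposition with $T$ or $T'$ appropriately in each case. Each run yields $|Q^*_M-Q^*_{\bar M}|\le d$ for that direction, and taking the minimum of the two bounds gives $\Delta_{s,a}$. I expect the sign and boundedness issue in the third term to be the delicate point. If rewards can be negative, I would replace $V^*$ by $\|V^*\|_\infty$-type bounds or $|V^*|$.
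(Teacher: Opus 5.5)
Your proposal is correct and follows the same route as the paper: value iteration from a common initialization, Lemma~\ref{Lemma1} as the inductive step, and a pointwise limit. Your extra scrutiny is well placed. The paper silently replaces $\max_{a'}Q^{n}_{\bar M}(s',a')$ by $V^*_{\bar M}(s')$ and drops the absolute value, which is justified precisely by your argument that with nonnegative rewards the iterates from $0$ increase monotonically to $Q^*$; without that assumption $d$ can even be negative and the statement fails, while running the two directions of $d$ separately instead of carrying the $\min$ through the induction is only equivalent bookkeeping.
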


Recall the extended MDP family defined previously. With this theorem we can provide an upper bound on the optimal Q-functions of any intermediate MDP between any two known MDPs $M_{\varphi_1}, M_{\varphi_2}$. Furthermore, by limiting the ground-truth descriptor $\varphi^*$ associated with the unknown knowledge gap $u$ between two descriptor samples $\varphi_1, \varphi_2$ with known estimated returns, we can prove an upper bound for the return gap under unknown $\varphi^*$ and its optimal policy $\pi^*_{\varphi^*}$.

\begin{theorem}[Upper bound of return for an unknown knowledge gap] Given two MDPs $M_{\varphi_1},M_{\varphi_2}$ with $\varphi_1, \varphi_2 \in \Phi$, for all $(s, a) \in \mathcal{S} \times \mathcal{A}$ and an unknown intermediate MDP $M_{\varphi^*}, \varphi^* = \lambda \varphi_1 + (1-\lambda)\varphi_2 ,\lambda \in (0,1)$, the upper bound $U$ on $J(\pi^*_{\varphi^*} \vert \varphi^*)$ can be defined as:
\begin{align}
    & J(\pi^*_{\varphi^*} \vert \varphi^*) \leq U_{\varphi^*}(\varphi_1, \varphi_2) \\
    & =\min\{ J(\pi^*_{\varphi_1} \vert \varphi_1), J(\pi^*_{\varphi_2} \vert \varphi_2) \} + \Delta_{s_0,a_0}(M_{\varphi_1}, M_{\varphi_2}) \nonumber 
\end{align}
\end{theorem}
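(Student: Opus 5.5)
The plan is to reduce the return statement to a statement about optimal Q-values at the initial pair, and then apply Lemma~\ref{theorem1} twice: once between $M_{\varphi^*}$ and $M_{\varphi_1}$, and once between $M_{\varphi^*}$ and $M_{\varphi_2}$.

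\emph{Step 1 (returns as Q-values).} I will use $J(\pi^*_{\varphi}\vert\varphi)=V^*_{\varphi}(s_0)=Q^*_{M_\varphi}(s_0,a_0)$. Here $a_0$ is the first action of the optimal policy, and I take it to be a common maximizer, as the statement implicitly does by evaluating $\Delta$ at a fixed $(s_0,a_0)$. If no common maximizer exists, I would instead work with $\max_a Q^*$ and use $\vert\max_a f-\max_a g\vert\le\max_a\vert f-g\vert$, at the cost of a $\max_a$ on $\Delta$.

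\emph{Step 2 (bound against each endpoint).} For $i\in\{1,2\}$, Lemma~\ref{theorem1} gives
\[
J(\pi^*_{\varphi^*}\vert\varphi^*)\le J(\pi^*_{\varphi_i}\vert\varphi_i)+\Delta_{s_0,a_0}(M_{\varphi^*},M_{\varphi_i}).
\]
Taking the smaller of the two right-hand sides yields the claimed form, provided
\[
\Delta_{s_0,a_0}(M_{\varphi^*},M_{\varphi_i})\le\Delta_{s_0,a_0}(M_{\varphi_1},M_{\varphi_2})\quad\text{for } i=1,2.
\]
So the whole theorem rests on showing that the intermediate MDP is no farther from either endpoint than the endpoints are from each other.

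\emph{Step 3 (the intermediate-distance inequality).} I expect this to be the main obstacle. I would rely on the parametrization used in the node representation, where the reward parameter and the failure probability enter $R_\varphi$ and $T_\varphi$ affinely. Under that assumption, $\varphi^*=\lambda\varphi_1+(1-\lambda)\varphi_2$ gives
\[
\vert R_{\varphi^*}-R_{\varphi_1}\vert=(1-\lambda)\vert R_{\varphi_1}-R_{\varphi_2}\vert,
\qquad
\vert T_{\varphi^*}-T_{\varphi_1}\vert=(1-\lambda)\vert T_{\varphi_1}-T_{\varphi_2}\vert,
\]
and analogously with factor $\lambda$ for $\varphi_2$. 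The source terms of the fixed-point equation for $d$ therefore shrink pointwise.

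The fixed-point map $d\mapsto c+\gamma\, T\max d$ is a monotone $\gamma$-contraction. Hence smaller nonnegative source terms $c$ give a smaller fixed point, which I would show by iterating from $d\equiv 0$ and comparing the two iterations. Two mismatches remain in this comparison:
\begin{itemize}
\item the propagating kernel $T$ in the recursion differs between the pairs $(\varphi^*,\varphi_i)$ and $(\varphi_1,\varphi_2)$;
\item the weight $V^*$ also differs between the pairs.
\end{itemize}
The $\min$ over the two orderings in the definition of $\Delta$ gives some freedom to pick the ordering whose propagating kernel is convenient. To control the mismatches, I would first try to bound both the kernel and the weight through a uniform bound $\Vert V^*\Vert_\infty\le R_{\max}/(1-\gamma)$. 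If that is too lossy, I would add a mild monotonicity assumption (for instance, $V^*_\varphi$ monotone in $\varphi$, which holds for the reward and failure-probability parametrizations). The factor $(1-\lambda)$ or $\lambda$, which is strictly less than $1$, should leave enough slack to absorb these mismatches in the discrete, normalized-reward setting.

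\emph{Step 4 (conclusion).} Combining Steps 2 and 3 and taking the minimum over $i$ gives $J(\pi^*_{\varphi^*}\vert\varphi^*)\le U_{\varphi^*}(\varphi_1,\varphi_2)$.
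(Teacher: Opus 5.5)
\textbf{Steps 1--2.} With the $\max_a$ fallback, your Steps 1--2 coincide with the paper's proof. The paper then closes with the bare assertion $\max_a\Delta_{s_0,a}(M_{\varphi^*},M_{\varphi_1})\le\max_a\Delta_{s_0,a}(M_{\varphi_1},M_{\varphi_2})$, so you have located the crux correctly.

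\textbf{Step 3 is false.} Step~3 cannot be carried out, because that inequality is false even for affine parametrizations, and so is the theorem as stated. Here is a counterexample.
\begin{itemize}
\item Take a single action and states $s_0,A,G,Z$, with $G,Z$ absorbing.
\item The reward is $1$ per step in $G$ and $0$ elsewhere.
\item For $\varphi=(\alpha,\beta)$ set $T_\varphi(A\vert s_0)=\alpha$, $T_\varphi(Z\vert s_0)=1-\alpha$, $T_\varphi(G\vert A)=\beta$, $T_\varphi(Z\vert A)=1-\beta$.
\end{itemize}
Then $V^*_\varphi(A)=\gamma\beta/(1-\gamma)$ and $V^*_\varphi(s_0)=\gamma^2\alpha\beta/(1-\gamma)$. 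With $\varphi_1=(1,0)$ and $\varphi_2=(0,1)$, both endpoint returns are $0$, while $J(\pi^*_{\varphi^*}\vert\varphi^*)=\gamma^2\lambda(1-\lambda)/(1-\gamma)>0$.

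Now use the appendix convention, under which Lemma~\ref{theorem1} is proved: $d(M\Vert\bar M)$ propagates with the kernel of $M$ and weights by $V^*_{\bar M}$. The source of $d(M_{\varphi_2}\Vert M_{\varphi_1})$ at $s_0$ is $\gamma\bigl[V^*_{\varphi_1}(A)+V^*_{\varphi_1}(Z)\bigr]=0$. Moreover $T_{\varphi_2}$ sends $s_0$ to $Z$, where the solution vanishes. Hence $\Delta_{s_0,a_0}(M_{\varphi_1},M_{\varphi_2})=0$, and the claimed bound would force $J(\pi^*_{\varphi^*}\vert\varphi^*)\le 0$. Meanwhile $\Delta_{s_0,a_0}(M_{\varphi^*},M_{\varphi_1})=d_{s_0,a_0}(M_{\varphi^*}\Vert M_{\varphi_1})=\gamma^2\lambda(1-\lambda)/(1-\gamma)$. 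So Lemma~\ref{theorem1} is tight here, and only your Step~3 comparison breaks.

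\textbf{Why the proposed patches fail.} The obstruction is exactly the kernel mismatch you flagged, not the size of the sources. $T_{\varphi_2}$ never visits $A$, where the source is $\gamma/(1-\gamma)$, whereas $T_{\varphi^*}$ reaches $A$ with probability $\lambda$.
\begin{itemize}
\item Since the right-hand side is $0$, no slack factor $\lambda$ or $1-\lambda$ can absorb this.
\item Replacing $V^*$ by $R_{\max}/(1-\gamma)$ only enlarges the left-hand side, while the right-hand side keeps its actual (here vanishing) weights.
\item $V^*_\varphi$ is nondecreasing in each coordinate of $\varphi$, so componentwise monotonicity does not rescue it either.
\end{itemize}

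\textbf{What survives.} First, suppose you assume $J(\pi^*_{\varphi^*}\vert\varphi^*)\le\max_i J(\pi^*_{\varphi_i}\vert\varphi_i)$. This holds, e.g., when only $R_\varphi$ varies, since $V^*$ is convex in $R$ for fixed $T$. Then Step~3 is unnecessary: Lemma~\ref{theorem1} applied directly to $(M_{\varphi_1},M_{\varphi_2})$ bounds $\vert J(\pi^*_{\varphi_1}\vert\varphi_1)-J(\pi^*_{\varphi_2}\vert\varphi_2)\vert$ by $\max_a\Delta_{s_0,a}(M_{\varphi_1},M_{\varphi_2})$. This gives the claim, with $a_0$ maximizing $\Delta_{s_0,\cdot}$ as in the appendix.

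Second, without such an assumption, your Steps 1--2 plus affinity (with rewards in $[0,R_{\max}]$) only yield a global bound in which propagation is replaced by a supremum: $J(\pi^*_{\varphi^*}\vert\varphi^*)\le\min_i J(\pi^*_{\varphi_i}\vert\varphi_i)+C/(1-\gamma)$ with
\[
C=\sup_{s,a}\Bigl[\vert R_{\varphi_1}(s,a)-R_{\varphi_2}(s,a)\vert+\tfrac{\gamma R_{\max}}{1-\gamma}\Vert T_{\varphi_1}(\cdot\vert s,a)-T_{\varphi_2}(\cdot\vert s,a)\Vert_1\Bigr].
\]
That is a different statement from the one claimed.
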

The theorem shows that by iteratively dividing and narrowing down the knowledge gap $u$ with MINT, we can in the end achieve a guaranteed return gap toward the ideal case. Additional interactions to further narrow the knowledge gap reduce the return gap $\Delta_{s_0,a_0}(M_{\varphi_1}, M_{\varphi_2})$ in the upper bound.




\section{Experiments}

In this section, we present the environmental setting, baselines, evaluation metrics, and our numerical results in three environments with increasing realism. We show that MINT can efficiently elicit human inputs and leverage them to directly improve planning performance.


\paragraph{Environmental Setting}

We evaluate our method in three environments to demonstrate the performance, query cost, and generality. The first two are based on the publicly available environment MiniGrid~\citep{MinigridMiniworld23} and Atari Pacman~\citep{bellemare2013arcade}, in which we introduce the uncertainty by adding unknown objects/elements affecting observation, state transition, and reward functions. This object is represented as an encoded block in MiniGrid or a pure-color rectangular area in Atari Pacman, which can be a randomized reward/penalty, an obstacle that can be passed through, or a terminal with a random possibility to happen. In MiniGrid, the agent needs to navigate to a final goal with a high reward, while taking each step has a minor negative penalty as cost. In Atari Pacman, the reward function remains the same except for the uncertain area. For the final environment, we create a high-fidelity emergency response scenario based on the Nvidia Isaac platform, in which the drone agent is asked to rescue an injured person in an unseen warehouse environment via visual 3D reconstruction(using Gaussian splatting~\cite{chen2025survey3dgaussiansplatting}) and planning. The AI agent encounters objects with low confidence (e.g., equipment on fire and boxes obfuscated by smoke) and must interact with humans to bridge the knowledge gap in planning, e.g., by asking whether to avoid a smoky area (i.e., uncertainty affecting transition probabilities) or to explore potential rescue targets in a separate room (i.e., uncertainty affecting reward functions).


\begin{table}[h]
    \centering
    \begin{tabular}{c|c|c|c}
  \toprule\toprule
    Method & Version & Return & Query Time\\
    \hline
    \multirow{2}*{Pure RL} & DQN & $271.3\pm21.7$ & - \\
    ~ & PPO & $325.0\pm34.5$& - \\
    Query-A & - & $422.1 \pm 24.7$& $27.1\pm13.6$\\
    \multirow{2}*{MINT} & Limited & $411.1\pm28.8$& $3.8\pm2.4$\\
    ~ & Standard & $434.9\pm 32.1$& $6.8\pm5.5$ \\
    \bottomrule\bottomrule
  \end{tabular}
    \caption{Results on Atari Pacman. Query-A (with human-in-the-loop RL) asks for expert action when the variance is high.}
    \label{tab:atari}
\end{table}

\begin{table}[h]
    \centering
    \begin{tabular}{c|c|c|c}
  \toprule\toprule
    Method & Version & Target 1 & Target 2 \\
    \hline
    \multirow{3}*{LLM} & GPT-4o & 47 & 5 \\
    ~ & o3-mini & 89 & 11 \\
    ~ & o3 & 93 & 9 \\
    MINT* & - & 100 & 95\\
    \bottomrule\bottomrule
  \end{tabular}
  \caption{Success rates on NVIDIA Isaac. Target 1 and target 2 refers to rescuing the main target and a hidden target. MINT* uses LLM for both planning and interaction.}
  \label{tab:isaac}
\end{table}


\begin{figure*}
    \centering
    \includegraphics[width=0.98\linewidth]{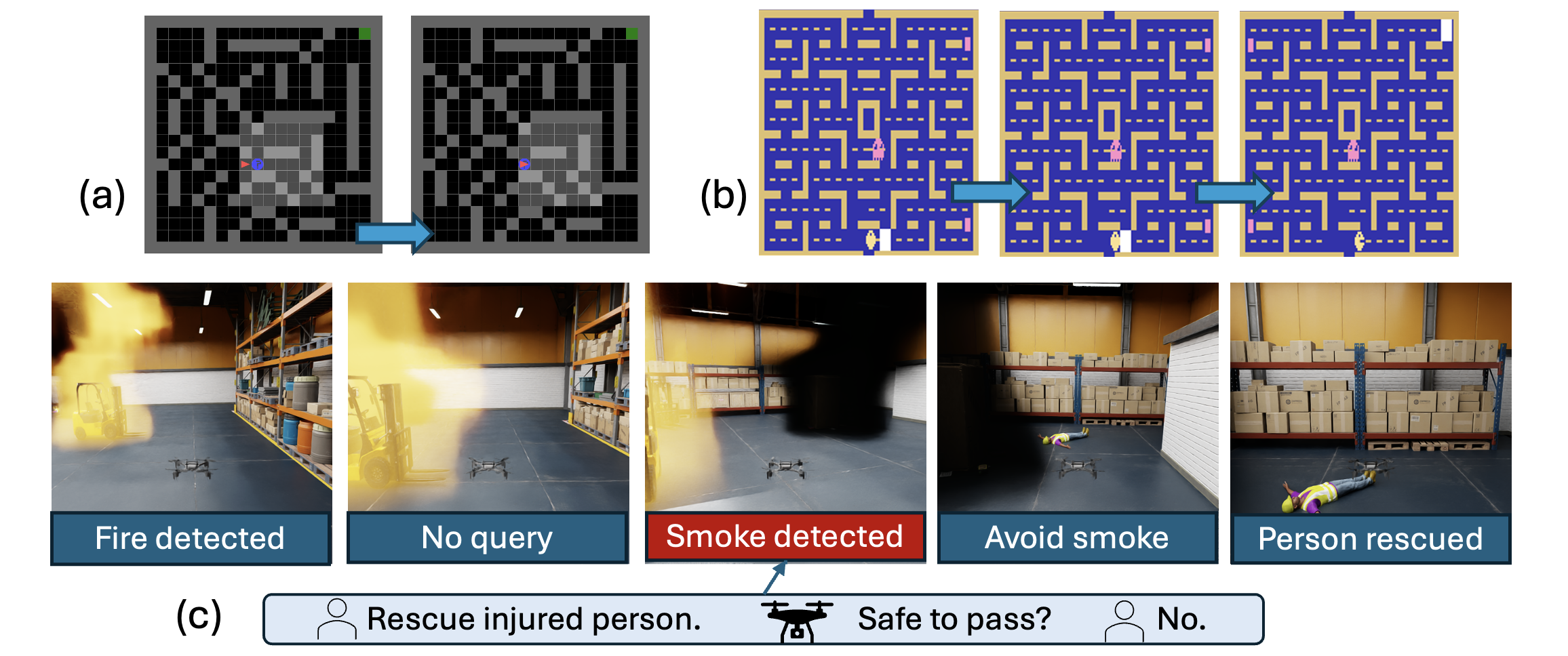}
    \vspace{-0.07in}
    \caption{Illustrations of how MINT acts in all 3 environments. (a) The agent faces unknown objects in MiniGrid and curates queries about its impact on transition; (b) The agent in Atari Pacman faces unseen targets (white) and curates queries about its impact on rewards; and (c) The agent in Isaac Search and Rescue reasons about the smoke, interacts with human, and plans its path accordingly.}
    \label{fig:enter-label}
\end{figure*}

\paragraph{Baselines and Metrics}

In MiniGrid, we compare MINT with both pure-LLMs and pure-RL methods with human-in-the-loop to demonstrate that MINT combines the advantages of both sides -- i.e., language-based interaction and reward optimization -- for significantly better performance. These methods have no query mechanism and no knowledge about the uncertainty. The pure-LLM method gets a prompt with encoded environment observation, and plans the entire path at once. Here, the metrics are the success rate and the average rewards for each episode. An episode is considered successful if the agent reaches the goal area within limited steps. The observation in Atari Pacman is raw pixels and hard to encode as natural language; therefore, we only compare the pure-RL and Query-baseline with MINT and use average reward and query times as metrics. The Query-baseline is the classical query method that asks for optimal action when variance is high. In Isaac environments, we remove the RL part of MINT and only use a fixed tree structure and the LLM to plan the path directly; thus, we show that MINT can also be generalized to non-RL planning frameworks in realistic scenarios. We calculate the success rate for rescuing all targets without crashing into objects and use it as the metric. For each environment, we evaluate for 100 runs and calculate the average and standard deviation. For RL methods and MINT involving training, we trained for 5 random seeds.

\paragraph{Results Analysis}
We first report the performance results of the MiniGrid environment with average and standard deviation in Table~\ref{tab:minigrid}. Both LLM and RL result in low rewards as our main evaluation metric for task performance. In particular, recent LLM methods most likely generate code using classical navigation algorithms such as A* while ignoring the knowledge gap and uncertainty, and then execute it to get results. Although the success rate (narrowly defined as task completion) is high, such a method is considered a conservative strategy since the hidden bonus from the uncertainty is not utilized. Besides, pure RL methods perform poorly since their neural network can be unpredictable with uncertain objects. In comparison, MINT iteratively narrows down the knowledge gap through queries, thus obtaining significantly improved performance. Also, even with a very limited number of queries, the performance is still competitive, since the symbolic tree in MINT is used to reason and optimize query strategy. Besides, results in Atari Pacman shown in Table~\ref{tab:atari} demonstrate that MINT can also be integrated into environments with complicated observation spaces, and it reaches competitive performance compared with traditional human-in-the-loop RL (HITL-RL) methods requiring expert actions, while significantly dropping the need for queries. Finally, with the results in the Isaac environment presented in Table~\ref{tab:isaac}, we show that MINT can be generalized to realistic planning problems with continuous control commands. While LLM-based planning strategies often have difficulty dealing with uncertain threats and hidden targets in this search and rescue task, MINT solves this problem by reasoning and curating queries, thus largely outperforming the baselines.

\section{Conclusion}
MINT introduces a novel approach to optimize human elicitation in language-based planning tasks. By consulting a neural planning policy and reasoning the impact of knowledge gaps, it systematically narrows down the knowledge gap through iterative query curation and thus resolves the decision-making uncertainty. MINT is shown to provide guarantees on planning performance through a local Lipschitz continuity property. Empirically, we demonstrated that MINT substantially outperforms existing pure-LLM and pure-RL baselines with human-in-the-loop, across diverse benchmarks including MiniGrid, Atari Pacman, and the realistic Isaac scenario. MINT leads to novel solutions for human-AI collaborative planning through neuro-symbolic reasoning and self-play. 

\section*{Impact Statement}

This paper presents work whose goal is to advance the field of Machine
Learning. There are many potential societal consequences of our work, none of 
which we feel must be specifically highlighted here.

\iftrue

\bibliography{example_paper}
\bibliographystyle{icml2026}
\newpage
\appendix
\onecolumn
\section{Appendix}

\subsection{Usage of Large Language Models}

The Large Language Models are used as a significant part of the methodology proposed in this paper. Nevertheless, they are not used for research ideation, derivations, proofs, experimental design, data analysis, or writing to the extent that they could be regarded as a contributor to authorship or a significant contribution under the conference policy.

\subsection{Theoretical Derivations and Proofs}

In this section, we provide detailed theoretical derivations and proofs of Lemma~4.2, Lemma~4.3, which are mainly inspired by previous works~\citep{lecarpentier2021lipschitz}, and thus leading to our main theorem~4.4. Before that, we need to first prove that Definition~4.1 is valid, \emph{i.e.}, there is a unique solution for $d_{s,a}$.

\begin{lemma}
    Given two MDPs $M=(\mathcal{S}, \mathcal{A}, T, R, \gamma)$ and $\bar M = (\mathcal{S}, \mathcal{A}, T', R', \gamma)$ that differ slightly in their transition kernels and reward functions, the following equation on $d: \mathcal{S} \times \mathcal{A} \rightarrow \mathbb{R}$ is a fixed-point equation admitting a unique solution for any $(s,a)$:
    \begin{equation*}
        d_{s,a} = \vert R(s,a) - R'(s,a)\vert +\gamma \sum_{s'\in \mathcal{S}} V^*_{\bar M}(s')\vert T(s' \vert s,a) - T'(s'\vert s,a)\vert + \gamma \sum_{s'\in \mathcal{S}} T(s' \vert s,a)\max_{a'\in\mathcal{A}}d_{s',a'}
    \end{equation*}
\end{lemma}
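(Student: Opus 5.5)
The plan is to apply the Banach fixed-point theorem to an operator on bounded functions. Write $\mathcal{B}$ for the space of bounded functions $d:\mathcal{S}\times\mathcal{A}\to\mathbb{R}$, equipped with the sup norm $\Vert d\Vert_\infty=\sup_{s,a}\vert d_{s,a}\vert$. This space is complete. On it, define
\begin{equation*}
    (Ld)_{s,a} = c(s,a) + \gamma \sum_{s'\in\mathcal{S}} T(s'\vert s,a)\max_{a'\in\mathcal{A}} d_{s',a'},
\end{equation*}
where
\begin{equation*}
    c(s,a)=\vert R(s,a)-R'(s,a)\vert+\gamma\sum_{s'} V^*_{\bar M}(s')\vert T(s'\vert s,a)-T'(s'\vert s,a)\vert .
\end{equation*}
The key point is that $c$ does not depend on $d$. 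The fixed-point equation in the statement is then exactly $d=Ld$, and I will prove that $L$ is a $\gamma$-contraction.

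The first step is to check that $L$ maps $\mathcal{B}$ into itself. I will assume the rewards are bounded, say $\vert R\vert,\vert R'\vert\le R_{\max}$; this matches the normalization to $[0,1]$ used earlier in the paper. Under this assumption $\vert V^*_{\bar M}\vert\le R_{\max}/(1-\gamma)$. Since $\sum_{s'}\vert T-T'\vert\le 2$, it follows that
\begin{equation*}
    \Vert c\Vert_\infty\le 2R_{\max}+\frac{2\gamma R_{\max}}{1-\gamma}.
\end{equation*}
The second term of $Ld$ is bounded by $\gamma\Vert d\Vert_\infty$, because $T(\cdot\vert s,a)$ is a probability distribution. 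Hence $Ld\in\mathcal{B}$. This boundedness step is the main place where a hidden assumption is needed. Without bounded rewards, or if the state space is infinite and $V^*$ is not summable against $\vert T-T'\vert$, the constant term could be infinite. I will therefore state the bounded-reward assumption explicitly.

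The second step is the contraction estimate. For $d,e\in\mathcal{B}$, the constant term $c$ cancels, which gives
\begin{equation*}
    \vert (Ld)_{s,a}-(Le)_{s,a}\vert\le\gamma\sum_{s'}T(s'\vert s,a)\,\Bigl\vert\max_{a'}d_{s',a'}-\max_{a'}e_{s',a'}\Bigr\vert .
\end{equation*}
I then use the elementary inequality $\vert\max_{a'} f(a')-\max_{a'} g(a')\vert\le\max_{a'}\vert f(a')-g(a')\vert$ inside the sum. Summing against the probability kernel gives $\Vert Ld-Le\Vert_\infty\le\gamma\Vert d-e\Vert_\infty$.

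Since $\gamma<1$, Banach's theorem yields a unique fixed point $d^*\in\mathcal{B}$, and hence a unique solution $d_{s,a}$ for every $(s,a)$. Banach's theorem also shows that value-iteration-style iterates $d^{(n+1)}=Ld^{(n)}$ converge to $d^*$ from any starting point, which is the form used later in the induction argument for Lemma~\ref{theorem1}. I will add one remark: the solution is nonnegative. This holds because $c\ge 0$ and $L$ preserves the set of nonnegative functions, which is closed, so iterating from $d^{(0)}=0$ keeps every iterate and the limit nonnegative. This nonnegativity is what lets $d$, and hence $\Delta$, serve as a pseudo-metric.
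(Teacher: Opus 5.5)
Your proof is correct and follows the paper's argument: Banach's fixed-point theorem applied to $L$ (the paper's $\mathcal{L}$), whose $d$-independent term cancels in differences, with $\bigl\vert\max_{a'}f-\max_{a'}g\bigr\vert\le\max_{a'}\vert f-g\vert$ giving modulus $\gamma$. Restricting explicitly to bounded functions and checking that $L$ preserves them is a genuine tightening, since the paper puts $\Vert\cdot\Vert_\infty$ on all of $\mathcal{F}(\mathcal{S}\times\mathcal{A},\mathbb{R})$ without comment, and on infinite state spaces uniqueness can fail among unbounded solutions.

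One caveat concerns your closing remark, which is not part of the lemma. The claim $c\ge 0$ needs $V^*_{\bar M}\ge 0$, i.e.\ nonnegative rewards as in the $[0,1]$ normalization, not merely $\vert R\vert,\vert R'\vert\le R_{\max}$. With a negative step penalty, as in MiniGrid, $c$ and hence $d$ can be negative, and nonnegativity alone would not make $\Delta$ a pseudo-metric in any case.
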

\begin{proof}
We denote
\begin{equation*}
    D_{s,a}(M\Vert \bar M) = \vert R(s,a) - R'(s,a)\vert +\gamma \sum_{s'\in \mathcal{S}} V^*_{\bar M}(s')\vert T(s' \vert s,a) - T'(s'\vert s,a)\vert.
\end{equation*} Let $\mathcal{L}$ be the functional operator that maps any function $d \in \mathcal{F}(\mathcal{S} \times \mathcal{A}, \mathbb{R})$ as follows: 
\[
    \mathcal{L}d_{s,a} = D_{s,a}(M\Vert \bar M) + \gamma \sum_{s'\in \mathcal{S}}T(s'\vert s,a)\max_{a'\in \mathcal{A}} d_{s',a'}.
\]
Therefore, for any two functions $f$ and $g$ in $\mathcal{F}(\mathcal{S}\times \mathcal{A}, \mathbb{R})$ and any $(s, a)\in \mathcal{S} \times \mathcal{A}$, we have:
\begin{align*}
    \mathcal{L}f_{s,a} - \mathcal{L}g_{s,a} & = \gamma \sum_{s' \in \mathcal{S}}T(s'\vert s,a) \left(\max_{a'\in\mathcal{A}}f_{s',a'} - \max_{a'\in\mathcal{A}}g_{s', a'} \right) \\
    & \leq \gamma \sum_{s' \in \mathcal{S}} T(s'\vert s,a) \max_{a'\in\mathcal{A}}(f_{s', a'} - g_{s', a'})\\
    & \leq \gamma \Vert f - g\Vert_{\infty}.
\end{align*}
Hence, we have $\Vert \mathcal{L}f - \mathcal{L}g\Vert_{\infty} \leq \gamma\Vert f- g\Vert_{\infty}$. Since $\gamma < 1$, $\mathcal{L}$ is a contraction mapping in the complete and non-empty metric space $(\mathcal{F}(\mathcal{S}\times \mathcal{A}, \mathbb{R}), \Vert \cdot \Vert_\infty )$. By directly applying the Banach fixed-point theorem, we can conclude that the previous equation on $d_{s,a}$ admits a unique solution.
\end{proof}
Next, we move on to prove the one-step Bellman bound in the main text.
\begin{lemma}[One-step Bellman Bound]
    With $\Gamma$ defined as the Bellman Operator on any function $Q:\mathcal{S} \times \mathcal{A} \rightarrow \mathbb{R}$ as:
    \begin{equation}
        \Gamma Q(s,a) = R(s,a) + \gamma \sum_{s' \in \mathcal{S}}T(s'\vert s,a)\max_{a' \in \mathcal{A}} Q(s',a'),
    \end{equation}
    for any two MDPs $M$ and $\bar M$, if function $Q$ is already bounded by $\Delta_{s,a}(M, \bar M)$, \emph{i.e.}, $\vert Q_{M}(s,a) - Q_{\bar M}(s,a)\vert \leq \Delta_{s,a}(M, \bar M)$, then we can guarantee:
    \begin{equation}
        \vert \Gamma Q_{M}(s,a) - \Gamma Q_{\bar M}(s,a)\vert \leq \Delta_{s,a}(M, \bar M). 
    \end{equation}

\end{lemma}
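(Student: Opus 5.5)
We propose to follow the standard argument from \citet{lecarpentier2021lipschitz}: expand the two Bellman backups, separate the reward difference from the transition-related terms, and then match each piece to a term in the fixed-point equation defining $d_{s,a}$. The only subtlety is the meaning of "$Q$ is already bounded by $\Delta$". We read the hypothesis as the inductive statement used in value iteration. The functions $Q_M$ and $Q_{\bar M}$ are iterates from identical initial values, with $Q_{\bar M}$ bounded by $V^*_{\bar M}$. The bound holds at all pairs, $\vert Q_M(s',a') - Q_{\bar M}(s',a')\vert \le d_{s',a'}(M\Vert\bar M)$ for all $(s',a')$. We first prove the claim with $d_{s,a}(M\Vert\bar M)$ in place of $\Delta$. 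The symmetric argument, with the roles of $M,\bar M$ exchanged, gives it for $d_{s,a}(\bar M\Vert M)$. Taking the minimum then yields $\Delta_{s,a}$, provided the hypothesis is available for whichever direction attains the minimum.

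\textbf{Step 1: decomposition.} Writing $\Gamma$ with $(R,T)$ for $M$ and with $(R',T')$ for $\bar M$, we add and subtract $\gamma\sum_{s'}T(s'\vert s,a)\max_{a'}Q_{\bar M}(s',a')$. This gives
\begin{align*}
\Gamma Q_M(s,a) - \Gamma Q_{\bar M}(s,a) &= \bigl(R(s,a)-R'(s,a)\bigr) \\
&\quad + \gamma\sum_{s'}T(s'\vert s,a)\Bigl(\max_{a'}Q_M(s',a') - \max_{a'}Q_{\bar M}(s',a')\Bigr) \\
&\quad + \gamma\sum_{s'}\bigl(T(s'\vert s,a)-T'(s'\vert s,a)\bigr)\max_{a'}Q_{\bar M}(s',a').
\end{align*}

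\textbf{Step 2: bounding each term.} The first term is bounded in absolute value by $\vert R(s,a)-R'(s,a)\vert$. For the second term we use $\vert\max_{a'} f - \max_{a'} g\vert \le \max_{a'}\vert f-g\vert$ together with the inductive hypothesis. This gives the bound $\gamma\sum_{s'}T(s'\vert s,a)\max_{a'}d_{s',a'}$. For the third term we take absolute values inside the sum. This gives the bound $\gamma\sum_{s'}\vert T-T'\vert\,\vert\max_{a'}Q_{\bar M}(s',a')\vert$.

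\textbf{Step 3: closing with the fixed point.} To finish, we replace $\max_{a'}Q_{\bar M}(s',a')$ by $V^*_{\bar M}(s')$. The three bounds then sum exactly to the right-hand side of the fixed-point equation for $d_{s,a}$, whose solution is unique by the preceding lemma, and we conclude $\vert\Gamma Q_M - \Gamma Q_{\bar M}\vert\le d_{s,a}$.

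\textbf{Main obstacle.} The hard step is the replacement in Step 3. It needs $0\le\max_{a'}Q_{\bar M}(s',a')\le V^*_{\bar M}(s')$. This holds if rewards are nonnegative (normalized to $[0,1]$, as in the paper's setting) and the iterates start from zero. In that case value iteration is monotone nondecreasing and converges up to $Q^*_{\bar M}$, so every iterate satisfies $0\le Q_{\bar M}\le Q^*_{\bar M}$. We will state these assumptions explicitly as part of the inductive hypothesis: nonnegative rewards, zero initialization, and bounds holding at all state-action pairs. With them the replacement is justified.

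The use of $\min$ in $\Delta$ is a second delicate point. The hypothesis at a single $(s,a)$ is not enough, because the second term needs the bound at all successor pairs $(s',a')$, and with respect to a single fixed direction of $d$. So we will carry each direction of $d$ through the induction separately and take the minimum only at the end. Alternatively, we can note that the minimizing direction may be chosen per pair once both directional bounds are established.
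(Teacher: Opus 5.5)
Your proposal is correct and follows the paper's proof essentially step for step. It uses the same add-and-subtract of $\gamma\sum_{s'}T(s'\vert s,a)\max_{a'}Q_{\bar M}(s',a')$, the same termwise bounds closed by the fixed-point equation for $d_{s,a}(M\Vert\bar M)$, and the same symmetric argument followed by a minimum.

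The paper replaces $\max_{a'}Q_{\bar M}(s',a')$ by $V^*_{\bar M}(s')$ without justification, so your restriction to zero-initialized value-iteration iterates with nonnegative rewards (giving $0\le Q^n_{\bar M}\le Q^*_{\bar M}$) is a genuine repair. For arbitrary $Q$ the claim is false in general once $T\neq T'$: take $Q_M=Q_{\bar M}$ arbitrarily large on successors where $T>T'$.

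Your worry about the minimum is unnecessary, however. Since $\Delta_{s',a'}\le d_{s',a'}$ in each direction, the $\Delta$-hypothesis at all successor pairs already supplies both directional hypotheses at once. This is exactly how the paper's chain passes from $\Delta_{s',a'}(M,\bar M)$ to $d_{s',a'}(M\Vert\bar M)$.
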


\begin{proof}
Since $\Delta_{s,a}(M,\bar M) = \min\{d_{s,a}(M\Vert \bar M),d_{s,a}(\bar M \Vert M) \}$, we can separately prove that $\vert \Gamma Q_M(s,a) - \Gamma Q_{\bar M}(s,a) \vert$ is less than or equal to both $d_{s,a}(M\Vert \bar M)$ and $d_{s,a}(\bar M\Vert M)$, and then summarize them to get the conclusion.
    \begin{align*}
        & \left\vert \Gamma Q_{M}(s,a) - \Gamma Q_{\bar M}(s,a)\right\vert \\
        & = \left\vert R(s,a) - R'(s,a) + \gamma \sum_{s'\in \mathcal{S}} \left[ T(s'|s,a)\max_{a'\in\mathcal{A}}Q_M(s',a') - T'(s'|s,a)\max_{a'\in\mathcal{A}}Q_{\bar M}(s',a')\right] \right\vert \\
        & \leq \left\vert R(s,a) - R'(s,a) \right\vert + \gamma \sum_{s'\in \mathcal{S}} \left\vert T(s'|s,a)\max_{a'\in\mathcal{A}}Q_M(s',a') - T'(s'|s,a)\max_{a'\in\mathcal{A}}Q_{\bar M}(s',a')\right\vert \\
        & \leq \left\vert R(s,a) - R'(s,a) \right\vert + \gamma \sum_{s'\in \mathcal{S}} \max_{a'\in\mathcal{A}}Q_{\bar M}(s',a') \left\vert T(s'|s,a) - T'(s'\vert s,a)\right\vert \\
        & +\gamma \sum_{s'\in \mathcal{S}}T(s'\vert s,a)\left\vert\max_{a'\in\mathcal{A}}Q_M(s',a') - \max_{a'\in\mathcal{A}}Q_{\bar M}(s',a')\right\vert\\
        & \leq \left\vert R(s,a) - R'(s,a) \right\vert + \gamma \sum_{s'\in \mathcal{S}} V^*_{\bar M}(s') \left\vert T(s'|s,a) - T'(s'\vert s,a)\right\vert \\
        & + \gamma \sum_{s'\in \mathcal{S}}T(s'\vert s,a)\max_{a'\in\mathcal{A}}\left\vert Q_M(s',a') - Q_{\bar M}(s',a')\right\vert \\
        & \leq D_{s,a}(M\Vert \bar M) + \gamma \sum_{s'\in \mathcal{S}} T(s'\vert s,a) \max_{a' \in \mathcal{A}}\Delta_{s',a'}(M, \bar M)\\
        & \leq D_{s,a}(M\Vert \bar M) + \gamma \sum_{s'\in \mathcal{S}} T(s'\vert s,a) \max_{a' \in \mathcal{A}}d_{s',a'}(M\Vert \bar M)\\
        & \leq d_{s,a}(M \Vert \bar M). \ \ \ \ \ \ \dots \ \ \ \ \ \ \ \ \text{Using Lemma~A.1}
    \end{align*}
\end{proof}

Similarly, by exchanging the order of $\Gamma Q_{M}(s,a)$ and $\Gamma Q_{\bar M}(s,a)$, we can also derive:
\[
    \vert \Gamma Q_M(s,a) - \Gamma Q_{\bar M}(s,a)\vert \leq d_{s,a}(\bar M \Vert M)
\]
By combining these two inequations, we can finally prove that:
\[
    \vert \Gamma Q_{M}(s,a) - \Gamma Q_{\bar M}(s,a)\vert \leq \Delta_{s,a}(M,\bar M).
\]
With Lemma~A.2, the local pseudo-Lipschitz continuity of the optimal Q-function can therefore be derived from induction.
\begin{lemma}[Local pseudo-Lipschitz continuity of Optimal Q-value]
For two MDPs $M, \bar M$, for all $(s,a) \in \mathcal{S} \times \mathcal{A}$, we have:
$\vert Q^*_{M}(s,a) - Q^*_{\bar M}(s,a)\vert \leq \Delta_{s,a}(M, \bar M)$.
\end{lemma}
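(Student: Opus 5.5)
The plan is to obtain $Q^*_M$ and $Q^*_{\bar M}$ as limits of synchronized value iteration and to push the bound through each iteration with Lemma~A.2.

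\textbf{Setup.} Define the sequences $Q^{(0)}_M = Q^{(0)}_{\bar M} \equiv 0$ and
\[
Q^{(n+1)}_M = \Gamma_M Q^{(n)}_M, \qquad Q^{(n+1)}_{\bar M} = \Gamma_{\bar M} Q^{(n)}_{\bar M},
\]
where $\Gamma_M$ and $\Gamma_{\bar M}$ are the Bellman optimality operators of the two MDPs. For bounded rewards and $\gamma<1$, each operator is a $\gamma$-contraction in $\Vert\cdot\Vert_\infty$. By Banach's theorem, $Q^{(n)}_M \to Q^*_M$ and $Q^{(n)}_{\bar M}\to Q^*_{\bar M}$ uniformly.

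\textbf{Induction.} The hypothesis is $\vert Q^{(n)}_M(s,a)-Q^{(n)}_{\bar M}(s,a)\vert \le \Delta_{s,a}(M,\bar M)$ for all $(s,a)$.
\begin{itemize}
\item \emph{Base case.} It holds at $n=0$, since the difference is $0$ and $\Delta\ge 0$. Nonnegativity follows because the fixed point of $\mathcal{L}$ in Lemma~A.1 has a nonnegative inhomogeneous term, so iterating $\mathcal{L}$ from $0$ stays nonnegative.
\item \emph{Inductive step.} Lemma~A.2 is stated precisely in this form: if the pair $(Q_M,Q_{\bar M})$ satisfies the $\Delta$-bound pointwise, then so does $(\Gamma Q_M,\Gamma Q_{\bar M})$. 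Applying it gives the bound at $n+1$.
\item \emph{Limit.} Fix $(s,a)$ and let $n\to\infty$ in the inequality. Pointwise convergence together with continuity of $\vert\cdot\vert$ gives $\vert Q^*_M(s,a)-Q^*_{\bar M}(s,a)\vert\le\Delta_{s,a}(M,\bar M)$.
\end{itemize}

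\textbf{Main obstacle.} The delicate point is that Lemma~A.2's proof replaces $\max_{a'}Q_{\bar M}(s',a')$ by $V^*_{\bar M}(s')$. For the bound $d_{s,a}(M\Vert\bar M)$ this requires $0\le \max_{a'} Q^{(n)}_{\bar M}(s',a')\le V^*_{\bar M}(s')$ along the iteration.

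To secure this, I would assume nonnegative rewards; the paper normalizes rewards to $[0,1]$. Starting from $Q^{(0)}\equiv 0\le Q^*$, monotonicity of $\Gamma_{\bar M}$ then gives $0\le Q^{(n)}_{\bar M}\uparrow Q^*_{\bar M}$.

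For the symmetric bound $d_{s,a}(\bar M\Vert M)$, the roles of $M$ and $\bar M$ are exchanged, and the same argument with $0\le Q^{(n)}_M\le Q^*_M$ applies.

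If signed rewards must be handled, the fallback is to shift every reward by a constant $c$. This shifts both $Q^*$'s by $c/(1-\gamma)$ and leaves their difference and $\vert R-R'\vert$ unchanged. However, it alters the $V^*$ weighting in $d$, so the result for signed rewards would be stated under this caveat. With the monotone-iteration remark in place, the induction closes and the lemma follows.
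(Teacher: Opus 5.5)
Your proposal is correct and follows the paper's proof: run value iteration for both MDPs from identical initial Q-functions, propagate the bound with Lemma~A.2 by induction, and pass to the limit. Your extra care, requiring $0 \le Q^{(n)} \le Q^*$ via nonnegative rewards and zero initialization, makes explicit an assumption the paper's Lemma~A.2 uses silently when it replaces $\max_{a'}Q_{\bar M}(s',a')$ by $V^*_{\bar M}(s')$; the same assumption also justifies the nonnegativity of $\Delta$ used in the base case.
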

\begin{proof}
    Suppose we are using the same value iteration algorithm for both $M$ and $\bar M$. Then, the initial Q-function of each state and action should be exactly the same:
\[
    \vert Q^0_{M}(s,a) - Q^0_{\bar M}(s,a)\vert = 0 \leq \Delta_{s,a}(M,\bar M).
\]
In each value iteration, the q-function is updated by the Bellman operator:
\begin{align*}
    & Q^{n+1}_M(s,a) = \Gamma Q^n_M(s,a) = R(s,a) +\gamma \sum_{s' \in \mathcal{S}}T(s'\vert s,a)\max_{a\in \mathcal{A}}Q^n_M(s',a'), \forall n \in \mathbb{N},
\end{align*}
Hence, by applying induction and Lemma A.2, we have:
\[
    \vert Q^n_M(s,a) - Q^n_{\bar M}(s,a) \vert \leq \Delta_{s,a}(M,\bar M), \forall n \in \mathbb{N}.
\]
The value iteration converges to the optimal Q-function: $\lim_{n\rightarrow \infty}Q^n_M(s,a) = Q^*_M(s,a)$, therefore we can conclude that $Q^*$ is pseudo-Lipschitz continuous in the local MDP space with:
\[
    \vert Q^*_M(s,a) - Q^*_{\bar M}(s,a) \vert \leq \Delta_{s,a}(M,\bar M).
\]
\end{proof}
With Lemma~A.3, we can deliver the proof of the main theorem in this paper as the upper bound of return for an unknown knowledge gap.
\begin{theorem}[Upper bound of return for an unknown knowledge gap] Given two MDPs $M_{\varphi_1},M_{\varphi_2}$ with $\varphi_1, \varphi_2 \in \Phi$, for all $s_0 \in \mathcal{S}$ and an unknown intermediate MDP $M_{\varphi^*}, \varphi^* = \lambda \varphi_1 + (1-\lambda)\varphi_2 ,\lambda \in (0,1)$, the upper bound $U$ on $J(\pi^*_{\varphi^*} \vert \varphi^*)$ can be defined as:
\begin{equation}
    J(\pi^*_{\varphi^*} \vert \varphi^*) \leq U_{\varphi^*}(\varphi_1, \varphi_2) = \min\{ J(\pi^*_{\varphi_1} \vert \varphi_1), J(\pi^*_{\varphi_2} \vert \varphi_2) \} + \Delta_{s_0,a_0}(M_{\varphi_1}, M_{\varphi_2}),
\end{equation}
in which $a_0 = \max_{a\in\mathcal{A}}\Delta_{s_0,a}(M_{\varphi_1}, M_{\varphi_2})$
\end{theorem}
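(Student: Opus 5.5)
The plan is to read the return as the optimal value at the initial state, $J(\pi^*_{\varphi}\vert\varphi)=V^*_{M_\varphi}(s_0)=\max_a Q^*_{M_\varphi}(s_0,a)$, and then apply Lemma~A.3 between the unknown MDP $M_{\varphi^*}$ and whichever endpoint MDP has the smaller return. The key steps are as follows.

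First, fix $i\in\{1,2\}$. Lemma~A.3, applied to the pair $(M_{\varphi^*},M_{\varphi_i})$, gives for every action $a$
\[
Q^*_{M_{\varphi^*}}(s_0,a)\le Q^*_{M_{\varphi_i}}(s_0,a)+\Delta_{s_0,a}(M_{\varphi^*},M_{\varphi_i}).
\]
Bounding $Q^*_{M_{\varphi_i}}(s_0,a)\le V^*_{M_{\varphi_i}}(s_0)$ and maximizing over $a$ then yields
\[
J(\pi^*_{\varphi^*}\vert\varphi^*)\le J(\pi^*_{\varphi_i}\vert\varphi_i)+\max_a\Delta_{s_0,a}(M_{\varphi^*},M_{\varphi_i}).
\]
This explains why the statement picks $a_0$ as the action maximizing $\Delta_{s_0,\cdot}$. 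I would read the statement's ``$a_0=\max$'' as an $\arg\max$.

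Second, I would replace $\Delta(M_{\varphi^*},M_{\varphi_i})$ by $\Delta(M_{\varphi_1},M_{\varphi_2})$, using that $\varphi^*$ lies on the segment. The natural assumption is that $T_\varphi$ and $R_\varphi$ are affine in $\varphi$, which holds for the parametrizations in the node representation: reward value $w$, or failure probability $p$. Under this assumption,
\[
\vert R_{\varphi^*}-R_{\varphi_1}\vert=(1-\lambda)\vert R_{\varphi_1}-R_{\varphi_2}\vert,\qquad \vert T_{\varphi^*}-T_{\varphi_1}\vert=(1-\lambda)\vert T_{\varphi_1}-T_{\varphi_2}\vert,
\]
and similarly with factor $\lambda$ for $\varphi_2$. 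So the local discrepancy terms $D_{s,a}$ are dominated by those for the pair $(\varphi_1,\varphi_2)$.

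To compare the full fixed points $d$, I would use the monotonicity of the contraction operator $\mathcal L$ from Lemma~A.1. If $D\le D'$ pointwise and the transition kernel weighting the $\max d$ term is the same, then the fixed points satisfy $d\le d'$. This follows by iterating $\mathcal L$ from $0$, since both $D$ and the $\gamma\sum T\max$ part are monotone.

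The main obstacle is that the two fixed points involve different data. The propagating kernel ($T_{\varphi^*}$ versus $T_{\varphi_1}$) differs, and so does the value weight $V^*$, which is $V^*_{M_{\varphi_i}}$ versus $V^*_{M_{\varphi_2}}$. I would exploit the $\min$ over the two orderings in $\Delta$ to choose the ordering whose kernel and value function are both those of an endpoint MDP. Any remaining mismatch I would try to absorb by assuming nonnegative, normalized rewards (as in the MiniGrid setup), so that the $V^*$ weights are comparable across MDPs. If exact domination fails, I would settle for the bound up to the factor $\max(\lambda,1-\lambda)\le1$, which still gives the stated inequality.

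Finally, taking the endpoint index $i$ that attains $\min\{J(\pi^*_{\varphi_1}\vert\varphi_1),J(\pi^*_{\varphi_2}\vert\varphi_2)\}$ gives the claimed $U_{\varphi^*}(\varphi_1,\varphi_2)$.
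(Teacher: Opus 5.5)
\textbf{There is a genuine gap, and it sits at the one step the paper's own proof does not justify.}

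Your first step is exactly the paper's proof: apply Lemma~A.3 to the pair $(M_{\varphi^*},M_{\varphi_i})$ and maximize over $a$. The paper then replaces $\max_a\Delta_{s_0,a}(M_{\varphi^*},M_{\varphi_i})$ by $\max_a\Delta_{s_0,a}(M_{\varphi_1},M_{\varphi_2})$, a replacement it simply asserts. You are right that this is the crux and that it needs structure on $\varphi\mapsto(T_\varphi,R_\varphi)$.

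For reward-only gaps (affine $R_\varphi$, common kernel) your monotonicity argument does close it. The fixed point of $\mathcal L$ is positively homogeneous in $D$, so $\Delta_{s,a}(M_{\varphi^*},M_{\varphi_1})=(1-\lambda)\Delta_{s,a}(M_{\varphi_1},M_{\varphi_2})$, and symmetrically with factor $\lambda$. This is where a factor $\max\{\lambda,1-\lambda\}$ legitimately appears.

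For transition gaps your plan fails, for three reasons:
\begin{itemize}
\item In either ordering of $(M_{\varphi^*},M_{\varphi_i})$, one of the two ingredients (the propagating kernel or the weight $V^*$) belongs to $M_{\varphi^*}$. So there is no ordering ``whose kernel and value function are both those of an endpoint MDP''.
\item Normalized rewards give no pointwise comparison between $V^*_{M_{\varphi^*}}$ and the endpoint values.
\item The fallback of ``settling for a factor $\max(\lambda,1-\lambda)$'' is not backed by any inequality.
\end{itemize}

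In fact no patch of this kind can work. Under affinity alone, both the comparison step and the theorem itself fail for transition gaps. Here is a counterexample.
\begin{itemize}
\item Take one action and states $s_0,A,B,G,X$, with $G$ and $X$ absorbing, reward $1$ at $G$ and $0$ elsewhere.
\item Let $T_{\varphi_1}$ send $s_0\to A$ and $A\to X$. Let $T_{\varphi_2}$ send $s_0\to B$ and $A\to G$. Both send $B\to X$.
\item Set $T_{\varphi^*}=\lambda T_{\varphi_1}+(1-\lambda)T_{\varphi_2}$, which is affine in $\varphi$.
\end{itemize}
Then $V^*_{M_{\varphi_1}}(s_0)=V^*_{M_{\varphi_2}}(s_0)=0$.

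Consider the ordering of $d$ that propagates with $T_{\varphi_2}$ and weights with $V^*_{M_{\varphi_1}}$. Its local term vanishes everywhere except at $A$. At $s_0$ it vanishes because $V^*_{M_{\varphi_1}}(A)=V^*_{M_{\varphi_1}}(B)=0$. Since $T_{\varphi_2}$ never reaches $A$ from $s_0$, this $d$ is $0$ at $s_0$. Hence $\Delta_{s_0}(M_{\varphi_1},M_{\varphi_2})=0$ and $U_{\varphi^*}(\varphi_1,\varphi_2)=0$.

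Yet $J(\pi^*_{\varphi^*}\vert\varphi^*)=\gamma^2\lambda(1-\lambda)/(1-\gamma)>0$. Correspondingly, $\Delta_{s_0}(M_{\varphi^*},M_{\varphi_1})$ equals this same positive number, so the comparison step fails outright.

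A correct version must do one of two things:
\begin{itemize}
\item Restrict to reward-only gaps. There your argument gives $J(\pi^*_{\varphi^*}\vert\varphi^*)\le\min_i J(\pi^*_{\varphi_i}\vert\varphi_i)+\max\{\lambda,1-\lambda\}\max_a\Delta_{s_0,a}(M_{\varphi_1},M_{\varphi_2})$.
\item Add a hypothesis that directly bounds $\Delta(M_{\varphi^*},M_{\varphi_i})$ by $\Delta(M_{\varphi_1},M_{\varphi_2})$.
\end{itemize}
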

\begin{proof}
Recalling the definition of optimal Q-function:$J(\pi^*_{\varphi^*}\vert \varphi^*) = \max_{a\in \mathcal{A}}Q^*_{M_{\varphi^*}}(s_0, a)$, we have:
\begin{align*}
    \vert J(\pi^*_{\varphi^*}\vert \varphi^*) - J(\pi^*_{\varphi_1}\vert\varphi_1)\vert &= \vert \max_{a\in \mathcal{A}}Q^*_{M_{\varphi^*}}(s_0, a) - \max_{a\in \mathcal{A}}Q^*_{M_{\varphi_1}}(s_0, a) \vert \\
    & \leq \max_{a\in \mathcal{A}}\vert Q^*_{M_{\varphi^*}}(s_0, a) - Q^*_{M_{\varphi_1}}(s_0, a) \vert \\
    & \leq \max_{a\in \mathcal{A}} \Delta_{s_0,a} (M_{\varphi^*}, M_{\varphi_1})\\
    & \leq \max_{a\in \mathcal{A}} \Delta_{s_0,a} (M_{\varphi_1}, M_{\varphi_2})
\end{align*}
Similarly, we can also prove that $\vert J(\pi^*_{\varphi^*}\vert \varphi^*) - J(\pi^*_{\varphi_2}\vert\varphi_2)\vert\leq \max_{a\in \mathcal{A}} \Delta_{s_0,a} (M_{\varphi_1}, M_{\varphi_2})$. Combining these two, we can conclude the proof.
\end{proof}

\subsection{Detailed Algorithmic Implementation}

\subsubsection{Pseudocode}

The overview pseudocode of MINT is divided into the training phase and deployment phase, demonstrated in Algorithm~\ref{algo1} and Algorithm~\ref{algo2}, respectively. The details of UA-DQN training can be found in its original paper~\citep{clements2019estimating}, while the node expansion and curation details are mainly explained in the main text.

\begin{algorithm} 
\small
\caption{MINT Training Phase}
\label{algo1}
    Initialize replay buffer $B$, UA-DQN $Q_\theta$
    
    \For{$i= 0 \to N_e$}{

    Reset the environment $M_\varphi$ with a randomized descriptor $\varphi$. Get initial state $s = s_0$ from $M_\varphi$

    \While{$done$ is False}{

    Choose action $a$ according to policy derived from $Q_\theta(s, a\vert \varphi)$
    
    Execute $a$, observe reward $r$, next state $s'$, terminal signal $done$
    
    Add $(s, \varphi, a, r, s')$ to B
        
    Randomly generate partial knowledge gap $u$ masking $\varphi \rightarrow \Phi_u$
    
    Generate partial gap u, create masked descriptor space $\Phi_u$
    
    Add additional sample $(s, \Phi_u, a, r, s')$ to $B$
    }
    
    Update $\theta$ with $\mathcal{B}$ using the loss function defined in UA-DQN
    }
\end{algorithm} 

\begin{algorithm} 
\small
\caption{MINT Deployment Phase}
\label{algo2}
    \While{$done$ is False}{

    Get current state $s$ from the environment.

    \If{no uncertain object detected}{
    
    $a^* = \arg\max_a Q_\theta(s,a\vert \varnothing)$

    Execute $a^*$, continue
    }

    \If{new uncertain object detected}{
    
    Construct root node from initial gap $u_0$ and current state $s$

    $u$ = $u_0$
    }
    
    \While{$u$ exists}{

    Calculate $\mu_u(s,a)$, $\sigma^2_u(s,a)$ for each $a$ using $Q_\theta$

    $a^*_u = \arg \max_{a}\mu_u(s,a)$
    
    $g(u) = \mu_u(s,a^*_u) - \max_{a\ne a^*_u}\mu_u(s,a)$
    
    \If{$g(u)\le \lambda_g \sigma_u(s,a^*_u)$ and not reaching depth limit}{
    Expand node $u$
    }

    Move to next node $u$ using BFS.
    }

    \For{$k=0,1,\cdots, K$}{

    sample $\varphi_1,\varphi_2$ as the boundary of $\Phi_{u_k}$
    
    \If{$\arg\max_a Q_\theta(s,a\vert \varphi_1) = \arg\max_a Q_\theta(s,a\vert \varphi_2)$}{Break}
    
    $q_k$ = LLMCuration($u_k$)
    
    Get human answer $y_k$ from $q_k$

    Update $u_k$ to $u_{k+1}$ using $q_k$ and $y_k$ via $LLM$
    }
    Execute $a^* = \arg\max_a \mu_{u_k}(s,a)$.
    }
\end{algorithm}
\subsubsection{Hyperparameters}

Table~\ref{tab:hyper} shows the major hyperparameters we use in MINT experimental settings.

\begin{table}[]
    \centering
    \begin{tabular}{c|c}
    \toprule \toprule
      Hyperparameters   & Values \\
      \hline
      Learning Rate $l_r$ & 1e-4 \\
      Epsilon $\varepsilon$ & 0.05 \\
      Batchsize & 256 \\
      Discount factor($\gamma$) & 0.99\\
      Number of quantiles & 50 \\
      Replay Buffer Size & 1e6 \\
      $\lambda_d$ & 1.0 \\
      Depth Limit $T_D$ & 5 \\
      Maximum Number of Queries $K$ & 5 \\
      LLM Temperature & 0.5 \\
      Episode Length(MiniGrid) & 400\\
      Training Episode & 5e7\\
    \bottomrule\bottomrule
    \end{tabular}
    \vspace{0.05in}
    \caption{Hyperparameters in MINT}
    \label{tab:hyper}
\end{table}

\subsubsection{LLM Prompts}

Here we present LLM prompt examples for branch merging, query curation, and tree updating in Listing~1, Listing~2, and Listing~3, respectively. The response of each LLM will be pre-processed and then encoded for usage.

\iftrue
\begin{lstlisting}[caption={Branch Merging Prompt Example}]
Given the following tree structure:

Node 0 (root): {Type=Unknown, Subtype=Unknown, Value range:[0,1], Optimal Action:0, Child: Node 1, Node 2}
Node 1: {Type=Transition, Subtype=Unknown, Value range:[0,1], Optimal Action:1, Child:Node 3, Node 4, Node 5}
...

Identify redundant branches or leaves that leading to identical optimal action. Output the tree structure with the same format.
\end{lstlisting}
\fi

\begin{lstlisting}[caption={Curation Prompt Example}]
Given the current symbolic tree structure:

Node 0 (root): {Type=Unknown, Subtype=Unknown, Value range:[0,1], Optimal Action:0, Child: Node 1, Node 2}
Node 1: {Type=Transition, Subtype=Unknown, Value range:[0,1], Optimal Action:1, Child:Node 3, Node 4, Node 5}
Node 2: {Type=Reward, Subtype=Unknown, Value range:[0,1], Optimal Action:0, Child:None}
Node 3:{Type=Transition, Subtype= determined, Value range:[1,1], Optimal Action:1, Child:None}
Node 4:{Type=Transition, Subtype= determined, Value range:[0,0], Optimal Action:0, Child:None}
Node 5:{Type=Transition, Subtype= stochastic, Value range:[0,1], Optimal Action:0, Child:None}

Formulate a single yes/no question based on "Type" to resolve the uncertainty in optimal action. Your question should have the maximal information gain and divide the original tree into two sub-trees with nearly consistent optimal actions.

\end{lstlisting}

\begin{lstlisting}[caption={Tree Updating Prompt Example}]
The current symbolic tree structure is as follows:

Node 0 (root): {Type=Unknown, Subtype=Unknown, Value range:[0,1], Optimal Action:0, Child: Node 1, Node 2}
Node 1: {Type=Transition, Subtype=Unknown, Value range:[0,1], Optimal Action:1, Child:Node 3, Node 4, Node 5}
Node 2: {Type=Reward, Subtype=Unknown, Value range:[0,1], Optimal Action:0, Child:None}
Node 3:{Type=Transition, Subtype= determined, Value range:[1,1], Optimal Action:1, Child:None}
Node 4:{Type=Transition, Subtype= determined, Value range:[0,0], Optimal Action:0, Child:None}
Node 5:{Type=Transition, Subtype= stochastic, Value range:[0,1], Optimal Action:0, Child:None}

Given the fact that the answer to the question "Is the uncertainty about a Transition parameter?" is "yes", remove branches inconsistent with this response and clearly state the pruned tree structure.
\end{lstlisting}

\subsection{Additional Experimental Details and Results}

\subsubsection{Environment Details}

In this paper, we use three different environments:MiniGrid, Atari Pacman, and NVIDIA Isaac-based environments. The screenshots of each environment are presented in Figure~\ref{fig:envs}.

MiniGrid is a maze-like environment, with the agent starting from the top left. The goal of the agent is to reach the goal, marked as the green block in the top right, within minimal steps. When the agent gets to the green block, it will receive a reward of 10. Otherwise, the agent will get a reward of -0.1 for each step. The episode ends when the agent reaches the goal or maximum steps are used. The agent can go to neighboring blocks in four directions, which is also the action space. Grey blocks refer to a wall that the agent cannot stand on or go through. There are 1-5 uncertain objects in certain fixed locations of the environment, marked as blue blocks, which either have an effect on transition or reward with a random value. The observation space of the agent will be its local view ($7 \times 7$ square in front of it) and current location.

The Atari Pacman environment is mainly based on its original game setting. However, we inject an uncertain object marked as the white rectangle in the raw frames. This uncertain object either has an effect on the transition or reward with a random value. By using template matching, we can locate the agent and thus manipulate the reward function and transition when there is overlap. Besides, we use the signal function to regularize the original reward in Atari environment into $\{0, 1\}$.

Finally, we create a high-fidelity emergency response scenario based on the Nvidia Isaac platform, in which a drone is asked to rescue an injured person in an unseen warehouse environment via visual 3D reconstruction, as shown in Figure~\ref{fig:envs} (c-1). Inside this warehouse, there is a fired forklift, an obstacle hidden by the smoke, and an extra injured person in another room. A VLM model will first scan the surroundings and execute 3D reconstruction, then locate the coordinates of each object. Next, the drone controlled by LLM will first plan a path represented in a sequence of coordinates to achieve the specific goals, then execute low-level control commands to move along this path.

\begin{figure}
    \centering
    \includegraphics[width=0.9\linewidth]{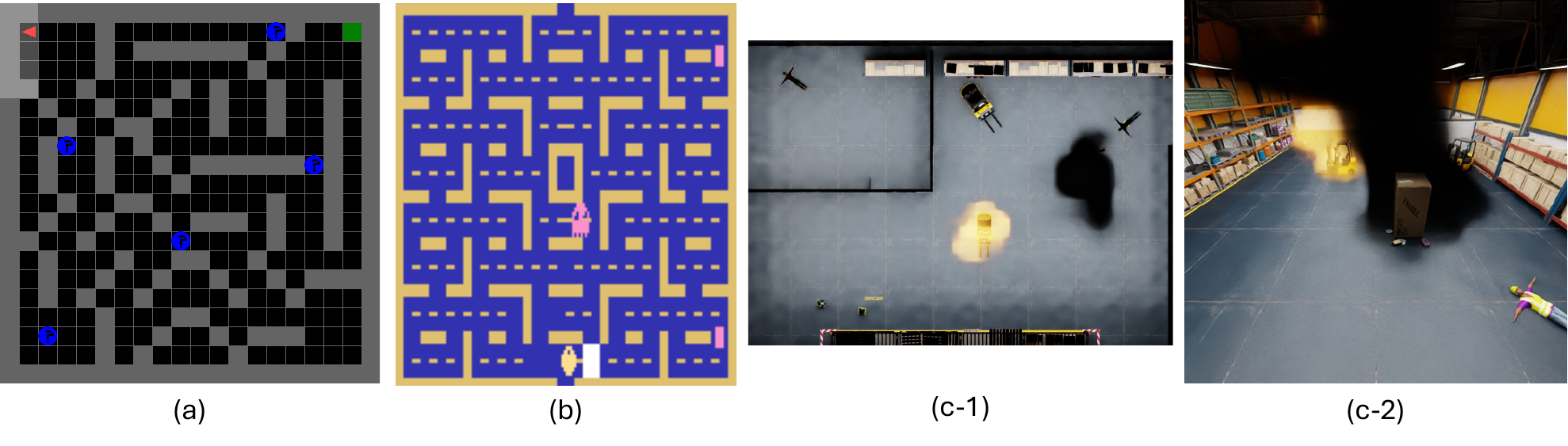}
    \caption{Screenshots of the environments used in this paper. (a)MiniGrid (b)Atari Pacman (c-1) an overview of NVIDIA Isaac environment (c-2) an example of drone view in Isaac environment.}
    \label{fig:envs}
\end{figure}

\subsubsection{Computational Resources}
All of the experiments in this paper are conducted on a server with an AMD EPYC 7513 128-Core Processor CPU and an NVIDIA RTX A6000 GPU.

\subsubsection{Performance Comparison with Inferential LLMs}

One of the drawbacks of the pure LLM method in MiniGrid environment is that LLM can be overly conservative in dealing with the uncertainties. It calls classical path-finding algorithms to reach the target and likely avoids the uncertainty blocks, even though it can be a positive reward or have little impact on transition, yielding relatively low reward compared to MINT, despite a good success rate. On the other hand, Query-A asks for optimal action whenever the variance is high. It assumes that a human user always gives the optimal action recommendation, thus achieving a similar reward as MINT. But human response is limited to the current state/choices in Query-A, unlike MINT, which can learn new knowledge with each response and apply it to reason new states/choices.

The point of comparing pure LLM approaches with MINT is that MINT combines neural policy (UA-DQN) with LLM to provide the optimal action based on variance estimates and human binary queries. This hybrid ensures robustness where pure LLMs fail.

Currently, we’re using GPT-4o inside MINT in MiniGrid and Atari environments (will clarify this in the revision), and comparing it with pure LLM, also using GPT-4o and other advanced versions. Since MINT introduces a neuro-symbolic approach, we feel this comparison can demonstrate the advantage of MINT’s neuro-symbolic approach. In the comparison, LLM methods require a full view of the environment (except details about the uncertainty) to plan the path. Compared with advanced models (o3 and o3-mini), MINT achieves comparative results by only using a low-level LLM (4o) and partial observation. We also conducted experiments using a stronger LLM inside MINT, i.e., o3 with long-form and multi-step reasoning capability, with results in Table~\ref{app_table_1}.

\begin{table*}[h]
\centering
\setlength{\tabcolsep}{0.6mm}{
\scalebox{0.8}{
\begin{tabular}{@{}c|cccc@{}}
\toprule\toprule
\multirow{2}*{LLMs} & \multicolumn{3}{c|}{MiniGrid} & \multirow{2}*{Atari} \\
\cline{2-4}
~ & \multicolumn{1}{c|}{1 object} & \multicolumn{1}{c|}{3 object} & \multicolumn{1}{c|}{5 object} & ~ \\
\hline\hline
4o Success\% & $99.6\pm0.5$ & $100\pm0$ & $97.0 \pm 0.9$ & - \\
\hline
4o Avg. Reward & $9.29\pm1.47$ & $9.90 \pm 1.09$ & $9.56\pm2.31$ & $411.1\pm28.8$ \\
\hline
o3 Success\% & $100\pm0$ & $100\pm0$ & $98.6\pm1.2$ & - \\
\hline
o3 Avg. Reward & $9.91\pm0.48$ & $9.88\pm0.91$ & $9.76\pm1.37$ & $428.7\pm30.6$ \\
\bottomrule\bottomrule
\end{tabular}}}
\caption{Comparative results with different models used in MINT.}
\label{app_table_1}
\end{table*}
The result shows that using an advanced LLM doesn’t bring much improvement over the original MINT, since the performance of MINT with 4o is already close to the optimal. However, since o3 is a long, multi-step reasoning model with an explicit chain of thoughts, the time cost of using such an advanced LLM is huge, demonstrating another advantage of MINT in saving the time cost.

\subsubsection{Uncertainty Decline Analysis}

We show the results of uncertainty decreasing over queries with human interactions in Table~\ref{app_table_2}. The experiment is taken in the MiniGrid Environment, measuring the average uncertainty ( estimated by the variance of Q-value) after a different number of queries.

\begin{table*}[h]
\centering
\setlength{\tabcolsep}{0.6mm}{
\scalebox{0.9}{
\begin{tabular}{@{}c|ccccc@{}}
\toprule\toprule
Avg. Uncertainty / Queries & 0 & 1 & 2 & 3 & 4 \\
\hline\hline
5 objects & 6.50 & 5.36 & 4.13 & 4.07 & 3.88 \\
\hline
3 objects & 5.73 & 4.14 & 3.79 & 3.13 & 2.93 \\
\hline
1 object & 1.37 & 0.40 & 0.05 & 0.06 & 0.04 \\
\bottomrule\bottomrule
\end{tabular}}}
\caption{Uncertainty Decline Analysis in the MiniGrid Environment.}
\label{app_table_2}
\end{table*}
From the results, the uncertainty decreases most at the first two queries, which meets our expectation that MINT will generate the query that maximizes the information gain. Besides, in the 1-object scene, the average uncertainty drops to nearly zero after only two queries, while with increasing uncertainties, more queries are required.

\end{document}
